%% file: main.tex
\documentclass{article}

\usepackage{microtype}
\usepackage{graphicx}
\usepackage{subcaption}
\usepackage{booktabs} 
\usepackage{enumitem}

\usepackage{hyperref}

\usepackage[preprint]{icml2026}

\usepackage{amsmath,amssymb,amsthm}
\usepackage{algorithm}
\usepackage{algorithmic}
\usepackage{multirow}
\usepackage{xcolor}
\usepackage{float}
\usepackage{tikz}
\usetikzlibrary{shapes.geometric, arrows.meta, positioning, calc, fit, backgrounds, decorations.pathreplacing}
\usepackage{tcolorbox}
\usepackage{dsfont}
\newtheorem{theorem}{Theorem}

\newtheorem{corollary}[theorem]{Corollary}
\newtheorem{proposition}[theorem]{Proposition}

\newcommand{\E}{\mathbb{E}}

\newcommand{\Prob}{\mathbb{P}}
\newcommand{\bpi}{\boldsymbol{\pi}}
\newcommand{\bz}{\mathbf{z}}

\newcommand{\by}{\mathbf{y}}
\newcommand{\bw}{\mathbf{w}}
\newcommand{\bG}{\mathbf{G}}

\newcommand{\bDelta}{\boldsymbol{\Delta}}

\definecolor{softcolor}{RGB}{255,200,200}
\definecolor{hardcolor}{RGB}{200,230,255}
\definecolor{gumbelcolor}{RGB}{255,235,200}
\definecolor{detercolor}{RGB}{200,255,200}
\definecolor{highlightcolor}{RGB}{144,238,144}
\icmltitlerunning{Align Forward, Adapt Backward: Closing the Discretization Gap in Logic Gate Networks}

\begin{document}

\twocolumn[
  \icmltitle{Align Forward, Adapt Backward: \\
Closing the Discretization Gap in Logic Gate Networks}

\icmlsetsymbol{equal}{*}

  \begin{icmlauthorlist}
    \icmlauthor{Youngsung Kim}{yyy}
  \end{icmlauthorlist}

  \icmlaffiliation{yyy}{Department of Artificial Intelligence and Department of Electrical and Computer Engineering, Inha University, Incheon, Korea}
  \icmlcorrespondingauthor{Youngsung Kim}{y.kim@inha.ac.kr;yskim.ee@gmail.com}
  
\icmlkeywords{differentiable discrete selection, training-inference mismatch, forward alignment, straight-through estimator, logic gate networks}

\vskip 0.3in
]
\printAffiliationsAndNotice{}  

\begin{abstract}
In neural network models, soft mixtures of fixed candidate components (e.g., logic gates and sub-networks) are often used during training 
for stable optimization, while hard selection is typically used at inference. This raises questions about training-inference mismatch. We analyze this gap by separating forward-pass computation (hard selection vs.\ soft mixture) from stochasticity (with vs.\ without Gumbel noise). Using logic gate networks as a testbed, we observe distinct behaviors across four methods: Hard-ST achieves zero selection gap by construction; Gumbel-ST achieves near-zero gap when training succeeds but suffers accuracy collapse at low temperatures; Soft-Mix achieves small gap only at low temperature via weight concentration; and Soft-Gumbel exhibits large gaps despite Gumbel noise, confirming that noise alone does not reduce the gap. We propose CAGE (Confidence-Adaptive Gradient Estimation) to maintain gradient flow while preserving forward alignment. On logic gate networks, Hard-ST with CAGE achieves over 98\% accuracy on MNIST and over 58\% on CIFAR-10, both with zero selection gap across all temperatures, while Gumbel-ST without CAGE suffers a 47-point accuracy collapse.

\end{abstract}

\input{section1_introduction}

\input{section2_background}
\input{section3_revisiting}      
\input{section4_theory}          
\input{section5_cage}            
\input{section6_experiments}     
\input{section9_conclusion}      

\input{impact_statement}

\bibliography{references}
\bibliographystyle{icml2026}

\newpage
\appendix
\onecolumn
\appendix
\input{appendix}

\input{appendix_proofs}
\input{appendix_experiments}
\input{appendix_input_distribution}

\end{document}

%% file: section1_introduction.tex

\section{Introduction}
\label{sec:introduction}

Many neural network architectures require selecting among discrete options 
during computation~\citep{mohamed2020monte, heuillet2024dnas, cai2025moe}. 
Differentiable logic gate networks~\citep{petersen2022deep, kim2023deep, 
yousefi2025mindthegap} learn Boolean circuits by choosing among gate 
operations (AND, OR, XOR, etc.) at each neuron. Similar discrete selection 
arises in neural architecture search~\citep{liu2019darts}, mixture-of-experts 
routing~\citep{jacobs1991adaptive, fedus2021switch}, and vector 
quantization~\citep{oord2017neural}. When hard selection is required at 
inference, a fundamental challenge emerges: methods using soft mixtures 
during training but hard selection at deployment suffer from 
\emph{training-inference mismatch}, while methods using hard selection 
throughout avoid this gap (Figure~\ref{fig:intro_overview}).

\input{fig_intro}

Recent work has achieved substantial gap reduction using the Straight-Through Gumbel-Softmax (Gumbel-ST) estimator~\citep{jang2017categorical, maddison2017concrete, xie2019snas}, which combines the Gumbel-Max trick~\citep{gumbel1954statistical} with straight-through gradient estimation~\citep{bengio2013estimating}: hard selection (argmax with Gumbel noise) in the forward pass, with gradients computed through the continuous Gumbel-Softmax relaxation in the backward pass~\citep{kim2023deep,yousefi2025mindthegap}. This success has been attributed to \emph{implicit Hessian regularization}—the hypothesis that Gumbel noise smooths the loss landscape, leading to solutions that generalize better to discrete inference~\citep{yousefi2025mindthegap}. We revisit this explanation and find it incomplete. Through controlled experiments, we show that methods with identical Gumbel noise exhibit markedly different gaps depending on whether their forward pass uses soft mixtures or hard selection. This suggests a simpler mechanism: gap reduction occurs because Gumbel-ST performs \emph{hard selection during training}, aligning training computation with deployment.

We call this principle \emph{forward alignment}: the forward pass during training should match the forward pass at deployment, eliminating training-inference mismatch. While conceptually straightforward, hard selection has been avoided due to concerns about biased gradient estimates from the straight-through approximation~\citep{bengio2013estimating} and the high variance of score-function alternatives like REINFORCE~\citep{williams1992simple}. Our central contribution is demonstrating that these gradient estimation challenges can be addressed \emph{entirely in the backward pass}—through temperature scheduling and adaptive gradient estimation—without compromising forward alignment. This separation is possible because backward-pass temperature does not affect the forward output for hard selection methods, a property we call \emph{temperature decoupling}.

\paragraph{Why does forward alignment matter?}
Our theoretical analysis reveals a fundamental asymmetry between mixture and hard selection methods. Mixture methods can minimize training loss by \emph{hedging}: distributing weight across multiple options to produce a blended output, without any single option achieving low loss individually. At deployment, argmax forces commitment to one option, exposing this lack of preparation. Hard selection methods cannot hedge—they must \emph{commit} to options that individually perform well, because only one option contributes to each forward pass. This distinction between hedging and commitment, formalized in Section~\ref{sec:theory}, explains why forward alignment eliminates the discretization gap.

\paragraph{Contributions.}

\begin{enumerate}
[nosep, leftmargin=*]
    \item \textbf{Theoretical framework.} We decompose the training-inference gap into \emph{selection gap} (method-dependent, reducible) and \emph{computation gap} (input-dependent, irreducible), and prove that hard selection methods achieve zero or near-zero selection gap regardless of backward temperature (Corollary~\ref{cor:gumbel_st_gap}, Proposition~\ref{prop:hard_st_gap}). We formalize the \emph{hedging capacity} of mixture methods and prove that hard selection eliminates it (Proposition~\ref{prop:competition}).
    
    \item \textbf{Empirical separation of causal factors.} Using a $2 \times 2$ factorial design crossing forward type (mixture vs.\ hard) with noise type (deterministic vs.\ Gumbel), we provide direct evidence that gap depends on forward structure, not gradient estimation. Critically, Soft-Gumbel and Gumbel-ST use identical noise distributions but exhibit dramatically different gaps, challenging the Hessian regularization hypothesis.
    
    \item \textbf{Discovery of a failure mode.} We identify that Gumbel-ST suffers catastrophic accuracy collapse at low temperatures (dropping from ${\sim}98\%$ to ${\sim}50\%$), while deterministic Hard-ST remains stable. This has practical implications for temperature scheduling.
    
    \item \textbf{CAGE algorithm.} We propose Confidence-Adaptive Gate Exploration, which exploits temperature decoupling to adapt backward temperature based on selection confidence, enabling exploration early and exploitation late without affecting alignment guarantees.
\end{enumerate}

%% file: fig_intro.tex

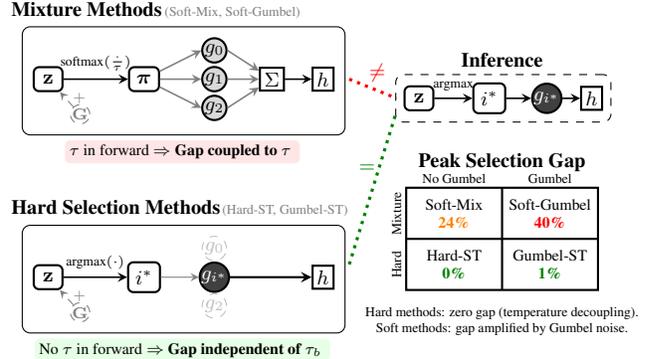
\begin{figure}[tb]
\centering
\begin{tikzpicture}[
    scale=0.85, transform shape,
    gate/.style={circle, draw, thick, minimum size=0.38cm, font=\footnotesize, inner sep=0pt},
    selgate/.style={gate, fill=black!75, text=white},
    deselgate/.style={circle, gray!50, draw, dashed, minimum size=0.38cm, font=\footnotesize, inner sep=0pt},
    mixgate/.style={gate, fill=gray!30},
    obs/.style={rectangle, draw, thick, minimum size=0.30cm, font=\footnotesize, inner sep=2pt},
    logit/.style={rectangle, draw, thick, rounded corners=2pt, minimum width=0.45cm, minimum height=0.24cm, font=\footnotesize},
    weight/.style={rectangle, draw, thick, rounded corners=2pt, minimum width=0.38cm, minimum height=0.22cm, font=\footnotesize},
    noise/.style={circle, draw, dashed, minimum size=0.24cm, font=\scriptsize, inner sep=0pt, gray},
    plate/.style={rectangle, draw, rounded corners=3pt, inner sep=4pt},
    arr/.style={->, >=stealth, semithick},
    selarr/.style={arr, thick},
    mixarr/.style={arr, gray},
    noisearr/.style={->, >=stealth, dashed, thin, gray}
]

\node[font=\small\bfseries, anchor=west] at (-4.9, 2.0) {Mixture Methods};
\node[font=\tiny, gray, anchor=west] at (-2.5, 1.95) {(Soft-Mix, Soft-Gumbel)};

\node[logit] (sz) at (-4.2, 0.9) {$\mathbf{z}$};
\node[noise] (sg) at (-3.7, 0.35) {$\mathbf{G}$};
\draw[noisearr] (sg) -- node[right, font=\tiny, gray, pos=0.6] {$+$} (sz);

\node[weight] (spi) at (-2.7, 0.9) {$\boldsymbol{\pi}$};
\draw[arr] (sz) -- node[above, font=\tiny] {softmax$(\frac{\cdot}{\tau})$} (spi);

\node[mixgate] (s0) at (-1.6, 1.35) {$g_0$};
\node[mixgate] (s1) at (-1.6, 0.9) {$g_1$};
\node[mixgate] (s2) at (-1.6, 0.45) {$g_2$};
\node[obs] (ssum) at (-0.7, 0.9) {$\Sigma$};
\node[obs] (so) at (0.1, 0.9) {$h$};

\draw[mixarr] (spi) -- (s0.west);
\draw[mixarr] (spi) -- (s1.west);
\draw[mixarr] (spi) -- (s2.west);
\draw[mixarr] (s0.east) -- (ssum);
\draw[mixarr] (s1.east) -- (ssum);
\draw[mixarr] (s2.east) -- (ssum);
\draw[arr] (ssum) -- (so);

\node[plate, fit=(sz)(sg)(spi)(s0)(s2)(ssum)(so)] {};

\node[font=\scriptsize, anchor=north, align=center, fill=red!10, rounded corners=2pt, inner sep=2pt] at (-2.1, -0.05) {
    $\tau$ in forward $\Rightarrow$ \textbf{Gap coupled to $\tau$}
};

\node[font=\small\bfseries, anchor=west] at (-4.9, -1.1) {Hard Selection Methods};
\node[font=\tiny, gray, anchor=west] at (-1.6, -1.15) {(Hard-ST, Gumbel-ST)};

\node[logit] (hz) at (-4.2, -2.2) {$\mathbf{z}$};
\node[noise] (hg) at (-3.7, -2.75) {$\mathbf{G}$};
\draw[noisearr] (hg) -- node[right, font=\tiny, gray, pos=0.6] {$+$} (hz);

\node[weight] (hsel) at (-2.7, -2.2) {$i^*$};
\draw[arr] (hz) -- node[above, font=\tiny] {argmax$(\cdot)$} (hsel);

\node[deselgate] (h0) at (-1.6, -1.75) {$g_0$};
\node[selgate] (h1) at (-1.6, -2.2) {$g_{i^*}$};
\node[deselgate] (h2) at (-1.6, -2.65) {$g_2$};
\node[obs] (ho) at (0.1, -2.2) {$h$};

\draw[arr, gray, thin] (hsel) -- (h1.west);
\draw[selarr] (h1.east) -- (ho);

\node[plate, fit=(hz)(hg)(hsel)(h0)(h2)(ho)] {};

\node[font=\scriptsize, anchor=north, align=center, fill=green!10, rounded corners=2pt, inner sep=2pt] at (-2.1, -3.15) {
    No $\tau$ in forward $\Rightarrow$ \textbf{Gap independent of $\tau_b$}
};


\node[font=\small\bfseries] at (2.9, 1.2) {Inference};

\node[logit] (iz) at (1.6, 0.6) {$\mathbf{z}$};
\node[weight] (isel) at (2.7, 0.6) {$i^*$};
\draw[arr] (iz) -- node[above, font=\tiny] {argmax} (isel);
\node[selgate] (igate) at (3.6, 0.6) {$g_{i^*}$};
\draw[arr] (isel) -- (igate);
\node[obs] (io) at (4.3, 0.6) {$h$};
\draw[arr] (igate) -- (io);

\node[plate, dashed, fit=(iz)(isel)(igate)(io), inner sep=3pt] {};

\node[font=\small\bfseries] at (2.9, -0.4) {Peak Selection Gap};

\draw[thick] (1.4, -0.8) rectangle (4.4, -2.4);
\draw[thick] (2.9, -0.8) -- (2.9, -2.4);  
\draw[thick] (1.4, -1.6) -- (4.4, -1.6);  

\node[font=\tiny] at (2.15, -0.65) {No Gumbel};
\node[font=\tiny] at (3.65, -0.65) {Gumbel};
\node[font=\tiny, rotate=90, anchor=south] at (1.45, -1.2) {Mixture};
\node[font=\tiny, rotate=90, anchor=south] at (1.45, -2.0) {Hard};

\node[font=\scriptsize, align=center] at (2.15, -1.2) {Soft-Mix\\{\color{orange}\textbf{24\%}}};
\node[font=\scriptsize, align=center] at (3.65, -1.2) {Soft-Gumbel\\{\color{red}\textbf{40\%}}};
\node[font=\scriptsize, align=center] at (2.15, -2.0) {Hard-ST\\{\color{green!50!black}\textbf{0\%}}};
\node[font=\scriptsize, align=center] at (3.65, -2.0) {Gumbel-ST\\{\color{green!50!black}\textbf{1\%}}};

\node[font=\tiny, align=center, anchor=north] at (2.9, -2.55) {
    Hard methods: zero gap (temperature decoupling).\\
    Soft methods: gap amplified by Gumbel noise.
};

\draw[dotted, very thick, red] (0.51, 0.9) -- (1.25, 0.6);
\node[font=\small, red] at (0.95, 1.0) {$\neq$};

\draw[dotted, very thick, green!50!black] (0.51, -2.0) -- (1.25, 0.4);
\node[font=\small, green!50!black] at (0.79, -0.5) {$=$};

\end{tikzpicture}

\caption{\textbf{Training-inference gap depends primarily on forward-pass structure.} \textbf{Left:} Mixture methods (top) compute weighted sums, enabling hedging; hard selection methods (bottom) commit to a single gate, matching inference. \textbf{Right-Top:} Inference uses argmax selection. \textbf{Right-Bottom:} Peak selection gap ($2 \times 2$ factorial, MNIST-Binary, $L$=6, $\tau$=2.0). Hard methods achieve near-zero gap; soft methods show 24--40\% gap, with Gumbel noise amplifying the gap.}
\label{fig:intro_overview}
\end{figure} 

%% file: section2_background.tex

\section{Background and Related Work}
\label{sec:background}

\subsection{Discrete Selection and the Training-Inference Gap}

Many neural network components must select among $K$ discrete choices, each producing output $g_i$. Examples include Boolean gates in logic gate networks~\citep{petersen2022deep}, candidate operations in neural architecture search~\citep{liu2019darts}, and expert networks in mixture of experts~\citep{zhou2022mixture, puigcerver2024soft, jacobs1991adaptive}. The standard approach parameterizes selection with logits $\bz \in \mathbb{R}^K$, converted to probabilities via softmax~\cite{maddison2017concrete, jang2017categorical, liu2019darts}:
\begin{equation}\label{eq:softmax}
\pi_i(\tau) = \frac{\exp(z_i/\tau)}{\sum_j \exp(z_j/\tau)},
\end{equation}
where temperature $\tau$ controls sharpness ($\tau \to 0$: one-hot; $\tau \to \infty$: uniform).

The challenge is that training and deployment handle these probabilities differently. During training, \emph{soft selection} computes a differentiable weighted mixture $h_{\text{soft}} = \sum_{i} \pi_i(\tau) g_i$, enabling gradient-based optimization. At deployment, \emph{hard selection} picks a single choice $h_{\text{hard}} = g_{i^*}$ where $i^* = \arg\max_i z_i$, as required for efficiency and interpretability~\citep{liu2019darts}. This structural difference between blending all choices and picking one creates the training-inference gap, even when individual $g_i$ are computed identically in both phases.

A middle ground is \emph{categorical sampling} via Gumbel-max~\citep{jang2017categorical,maddison2017concrete}:
\begin{equation}\label{eq:gumbel_sample}
h_{\text{sample}} = g_k, \quad k = \arg\max_i (z_i + G_i),
\end{equation}
where $G_i$ are i.i.d.\ standard Gumbel variables (CDF: $F(x) = e^{-e^{-x}}$). The Gumbel-max theorem ensures $\Prob(k = i) = \mathrm{softmax}(\bz)_i$. Crucially, this probability depends only on logits, not temperature, since argmax is scale-invariant~\citep{gumbel1954statistical}. This property is central to our analysis.

\subsection{Straight-Through Estimators}

Hard selection is non-differentiable, but the straight-through (ST) estimator~\citep{bengio2013estimating} enables gradient-based training by decoupling forward and backward computation: the forward pass uses discrete selection, while the backward pass computes gradients through a continuous surrogate.

\textbf{Gumbel-ST}~\citep{jang2017categorical, maddison2017concrete} applies this principle to categorical sampling. The forward pass samples a discrete choice via~\eqref{eq:gumbel_sample}, while the backward pass differentiates through $\tilde{\bpi}(\tau) = \mathrm{softmax}((\bz + \mathbf{G})/\tau)$. Here, temperature $\tau$ controls gradient smoothness without affecting forward selection. Section~\ref{sec:framework} formalizes this as \emph{temperature decoupling}.


\subsection{Challenges of Hard Selection}

If hard selection eliminates mismatch, why is it often avoided? The literature documents several concerns~\citep{bengio2013estimating,jang2017categorical,liu2019darts}. Sparse gradients mean only selected choices directly influence learning. Deterministic argmax may lock into suboptimal early choices. Discrete selections can flip abruptly, causing instability. Convergence is typically slower without smooth gradient landscapes~\citep{zela2020understanding}. These concerns have led practitioners toward soft methods despite their mismatch problem, or toward domain-specific workarounds. Our approach resolves this tension by showing that backward-pass adaptation can address these optimization challenges while forward alignment preserves zero mismatch.

\subsection{Related Work}

The training-inference gap manifests across domains, with each community developing its own solutions. In neural architecture search, DARTS~\citep{liu2019darts} exemplifies the soft relaxation paradigm and the resulting discretization gap. Subsequent methods such as SNAS~\citep{xie2019snas} and GDAS~\citep{dong2019gdas} introduce stochastic or near-hard operation selection to mitigate this mismatch. In large-scale mixture-of-experts models, hard routing has been predominantly adopted~\citep{fedus2021switch}, effectively avoiding training–inference inconsistency. Similarly, VQ-VAE~\citep{oord2017neural} employs hard codebook selection together with auxiliary commitment losses to stabilize learning. By contrast, differentiable logic gate networks~\citep{petersen2022deep} typically rely on soft gate mixtures, which can introduce a soft-to-hard discrepancy at inference time. \citet{kim2023deep} introduced the Gumbel-ST framework for logic gate networks, relieving this discrepancy, and \citet{yousefi2025mindthegap} attribute its improvement to Hessian regularization acting as a smooth optimization objective.

Our analysis offers a unifying perspective: across these domains, 
methods succeed when forward computation aligns with deployment, 
regardless of the specific noise injection or regularization employed. Temperature scheduling has been explored via sparsifying small 
temperatures~\citep{zhang2023small} and decoupled forward-backward 
temperatures~\citep{shah2024decoupled}; our framework explains \emph{why} such approaches succeed specifically for hard forward methods.

%% file: section3_revisiting.tex

\section{Forward Alignment Framework}
\label{sec:framework}

Section~\ref{sec:background} introduced soft and hard selection. We now characterize where the training-inference gap arises and which design choices eliminate it.

\subsection{Gap Decomposition}
\label{sec:gap_decomposition}

The gap between training and deployment has two potential sources: (1) the selection mechanism (mixture vs.\ argmax), and (2) the gate evaluation (soft vs.\ hard; see Appendix~\ref{app:gate_definitions} for definitions). To isolate these, we define three accuracy measures on the test set.

Let $i^* = \arg\max_i z_i$ denote the selected gate. $A^{M}$ uses method $M$'s forward computation, matching the training setting. $A^{\text{soft}}$ uses argmax selection with soft gate evaluation $g_{i^*}^{\text{soft}}$. $A^{\text{hard}}$ uses argmax selection with hard gate evaluation $g_{i^*}^{\text{hard}}$, matching the deployment setting.

The total gap decomposes as:
\begin{equation}\label{eq:gap_decomp}
\underbrace{A^{M} - A^{\text{hard}}}_{\text{Total Gap}} = 
\underbrace{A^{M} - A^{\text{soft}}}_{\text{Selection Gap}} + 
\underbrace{A^{\text{soft}} - A^{\text{hard}}}_{\text{Computation Gap}}
\end{equation}

\textbf{Selection gap} isolates the effect of selection mechanism. For mixture methods, training uses $\sum_i \pi_i g_i^{\text{soft}}$ while deployment uses single gate $g_{i^*}^{\text{soft}}$; these differ. For hard selection methods, training already uses $g_{i^*}^{\text{soft}}$, so $A^M = A^{\text{soft}}$ and selection gap is zero.

\textbf{Computation gap} isolates the effect of gate evaluation, comparing soft versus hard gates for the same selected gate. This depends only on input distribution: for binary inputs, soft and hard gates are identical (zero gap), while continuous inputs produce an unavoidable gap (Appendix~\ref{app:input_distribution}). Importantly, computation gap is identical across all methods.

The decomposition reveals an asymmetry: selection gap is method-dependent and reducible, whereas computation gap is input-dependent, irreducible, yet method-independent. (In the remainder of the paper, we write $g_i$ without superscript; the soft/hard distinction is relevant only for computation gap.)

\subsection{Forward Alignment and Temperature Decoupling}
\label{sec:forward_alignment}

\paragraph{Forward alignment.}
Figure~\ref{fig:intro_overview} organizes methods along two axes: selection type (mixture vs.\ hard) and noise type (deterministic vs.\ Gumbel). The key insight is that selection type (row), not noise type (column), determines selection gap. If training uses argmax selection (hard selection methods), it matches deployment exactly, regardless of whether Gumbel noise is present. We call such methods \emph{forward-aligned}.

This yields predictions that contradict prior work attributing gap reduction to Gumbel noise~\citep{yousefi2025mindthegap}: (1) Hard-ST should have smaller gap than Soft-Gumbel, despite lacking Gumbel noise; (2) adding Gumbel noise to mixture methods (Soft-Mix $\to$ Soft-Gumbel) should not reduce selection gap. Section~\ref{sec:experiments} tests these predictions.

\paragraph{Temperature decoupling.}
Forward alignment provides a practical benefit beyond gap reduction. In mixture methods, temperature $\tau$ controls the forward output through $\pi_i(\tau) = \mathrm{softmax}(\bz/\tau)_i$, so changing $\tau$ alters the mixture weights and thereby the selection gap. Temperature and gap are coupled.

In hard selection methods, the forward pass uses argmax, which is scale-invariant: $\arg\max_i z_i = \arg\max_i (z_i/\tau)$ for any $\tau > 0$. Temperature enters only the backward pass through the ST surrogate gradient. We call this the \emph{backward temperature} $\tau_b$. Since $\tau_b$ does not affect the forward output, it does not affect the selection gap. Temperature and gap are decoupled.

This decoupling enables adaptive scheduling: high $\tau_b$ early for exploration, low $\tau_b$ late for exploitation, without affecting alignment. Section~\ref{sec:cage} develops this into the CAGE algorithm.

%% file: section4_theory.tex
\section{Theoretical Analysis}
\label{sec:theory}

Section~\ref{sec:framework} decomposed the training-inference gap into selection gap (method-dependent) and computation gap (input-dependent). We now provide rigorous bounds on these gaps, analyze why different methods produce different solutions, and characterize how temperature affects convergence. Proofs appear in Appendix~\ref{app:proofs}. 

\subsection{Node-Level Gap Bounds}
\label{sec:node_level_bounds}

Consider a selection node choosing among $K$ gates with outputs $\{g_0, \ldots, g_{K-1}\}$ based on logits $\bz$. Let $i^* = \arg\max_i z_i$ denote the inference-time selection. We analyze the selection gap $\Delta h_{\text{sel}}$ from Section~\ref{sec:gap_decomposition}, which captures mismatch due to different selection mechanisms.

\begin{proposition}[Gumbel-Max Theorem {\citep{gumbel1954statistical}}]
\label{prop:gumbel_max}
For $G_i \stackrel{\mathrm{iid}}{\sim} \mathrm{Gumbel}(0,1)$:
\begin{equation}
    \Prob\bigl(\arg\max_i(z_i + G_i) = j\bigr) = \mathrm{softmax}(\bz)_j.
\end{equation}
This probability is temperature-independent since $\arg\max$ is scale-invariant. (Proof sketch in Appendix~\ref{proof:gumbel_max})
\end{proposition}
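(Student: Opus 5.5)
The plan is to compute the probability directly, by conditioning on the value of the perturbed logit of index $j$ and integrating against the Gumbel density. Write $F(x)=e^{-e^{-x}}$ for the standard Gumbel CDF and $f(x)=e^{-x}e^{-e^{-x}}$ for its density. Because the $G_i$ are continuous and independent, ties among the $z_i+G_i$ occur with probability zero, so $\arg\max$ is almost surely unique and the event $\{\arg\max_i(z_i+G_i)=j\}$ coincides a.s.\ with $\{G_i < z_j+G_j - z_i \ \forall i\neq j\}$.

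Conditioning on $G_j=g$ and using independence gives
\begin{equation*}
\Prob\bigl(\arg\max_i(z_i+G_i)=j\bigr)=\int_{-\infty}^{\infty} f(g)\prod_{i\neq j}F(g+z_j-z_i)\,dg .
\end{equation*}
The product simplifies as $\prod_{i\neq j}\exp\bigl(-e^{-g}e^{z_i-z_j}\bigr)=\exp\bigl(-e^{-g}\sum_{i\neq j}e^{z_i-z_j}\bigr)$. Combining with the factor $e^{-e^{-g}}$ from $f(g)$ yields the integrand $e^{-g}\exp(-e^{-g}S)$, where $S=\sum_i e^{z_i-z_j}=\sum_i e^{z_i}/e^{z_j}$.

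The substitution $u=e^{-g}$ (so that $du=-e^{-g}\,dg$) turns the integral into $\int_0^\infty e^{-uS}\,du = 1/S = e^{z_j}/\sum_i e^{z_i}=\mathrm{softmax}(\bz)_j$. This step is the only real computation, and I expect it to be the main place to be careful: the key is to absorb the $i=j$ term correctly into $S$ so that the exponent collapses to a single exponential. Everything else is bookkeeping.

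Finally, the temperature-independence claim follows from scale invariance. For any $\tau>0$, the map $x\mapsto x/\tau$ is strictly increasing, so $\arg\max_i(z_i+G_i)=\arg\max_i\bigl((z_i+G_i)/\tau\bigr)$ pointwise. The selected index, and hence its law, is therefore the same whatever $\tau$ is used, and $\tau$ can only enter through the softmax surrogate used in the backward pass, not through the forward sample.
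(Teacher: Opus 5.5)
Your proposal is correct and follows the paper's proof sketch essentially step for step: you condition on $G_j=g$, multiply the Gumbel CDFs, absorb the $i=j$ factor from the density into $S=\sum_i e^{z_i-z_j}$, substitute $u=e^{-g}$, and invoke the scale invariance of $\arg\max$ for temperature independence. Your observation that ties occur with probability zero, so the $\arg\max$ event is almost surely well defined, is a small point of rigor that the paper leaves implicit.
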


\begin{corollary}[Gumbel-ST: Selection Gap]
\label{cor:gumbel_st_gap}
For Gumbel-ST with $k = \arg\max_i(z_i + G_i)$, the expected selection gap is
\begin{equation}
    \E[\Delta h_{\text{sel}}] = \sum_{i \neq i^*} p_i (g_i - g_{i^*}), \quad p_i = \mathrm{softmax}(\bz)_i.
\end{equation}
This gap is determined entirely by the logit-induced distribution, independent of any temperature used in gradient computation. (Proof in Appendix~\ref{proof:gumbel_st_gap})
\end{corollary}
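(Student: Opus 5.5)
The plan is to write the selection gap at a single node as the difference between the training-time forward output and the inference-time output. For Gumbel-ST the training forward output is $h_{\text{train}} = g_k$ with $k = \arg\max_i(z_i+G_i)$, and the inference output is $g_{i^*}$. So we take $\Delta h_{\text{sel}} = g_k - g_{i^*}$. This is the node-level analogue of $A^M - A^{\text{soft}}$ from Section~\ref{sec:gap_decomposition}; the gates $g_i$ are held fixed for a given input, so the only randomness is the Gumbel vector $\bG$.

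The first step is to rewrite the random gap with indicators over the sampled index:
\begin{equation*}
\Delta h_{\text{sel}} = \sum_{i} \mathds{1}[k=i]\,(g_i - g_{i^*}).
\end{equation*}
The $i=i^*$ term vanishes identically, so the sum can be restricted to $i\neq i^*$. Taking expectation over $\bG$ and using linearity gives $\E[\Delta h_{\text{sel}}] = \sum_{i\neq i^*}\Prob(k=i)(g_i-g_{i^*})$. By Proposition~\ref{prop:gumbel_max}, $\Prob(k=i)=\mathrm{softmax}(\bz)_i = p_i$, which is the claimed formula. Ties in the argmax occur with probability zero because the Gumbel distribution is continuous, so $k$ is almost surely well defined.

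The second step is the temperature-independence claim. In Gumbel-ST, $\tau$ enters only through the backward surrogate $\tilde{\bpi}(\tau)=\mathrm{softmax}((\bz+\bG)/\tau)$. The forward index $k$ is invariant to this, since $\arg\max_i (z_i+G_i)/\tau = \arg\max_i(z_i+G_i)$ for every $\tau>0$. Hence the law of $k$, and therefore the expression above, depends only on $\bz$ and the $g_i$. This is exactly the scale-invariance remark already recorded in Proposition~\ref{prop:gumbel_max}.

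There is no real technical obstacle. The step needing the most care is the definitional one: we must fix $\Delta h_{\text{sel}}$ as the node output difference conditional on the input, so that the $g_i$ are deterministic and $\bG$ is independent of them. If the gate outputs depended on upstream Gumbel samples, we would first condition on the upstream noise and the input, apply the argument above at this node, and then note the result holds pointwise under that conditioning.
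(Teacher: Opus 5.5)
Your proof is correct and takes essentially the same route as the paper: the paper computes $\E[h^M] = \sum_i p_i g_i$ via the Gumbel-Max theorem and subtracts $g_{i^*}$, which is your indicator-plus-linearity argument applied to the output rather than the gap, and it justifies temperature independence the same way, since $p_i$ depends only on the raw logits. Your remarks on the almost-sure uniqueness of the argmax, and on conditioning when gate outputs depend on upstream noise, are careful additions that the paper leaves implicit.
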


\begin{proposition}[Hard-ST: Zero Selection Gap]
\label{prop:hard_st_gap}
For Hard-ST, $\Delta h_{\text{sel}} = 0$ exactly, since both training and inference output $g_{i^*}$ where $i^* = \arg\max_i z_i$.  
\end{proposition}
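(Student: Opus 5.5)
The argument is a direct unfolding of definitions, organized like the proof of Corollary~\ref{cor:gumbel_st_gap} with the stochastic selection index replaced by a deterministic one. First I would pin down what $\Delta h_{\text{sel}}$ means at the node level. Following Section~\ref{sec:gap_decomposition}, it is the difference between the node output under method $M$'s training-time forward computation and the output under argmax selection with the same gate evaluation:
\[
\Delta h_{\text{sel}} = h^{M} - g_{i^*}, \qquad i^* = \arg\max_i z_i .
\]
Next I would write out Hard-ST's forward pass. The forward output is $h^{\text{Hard-ST}} = g_k$ with $k = \arg\max_i z_i$, computed from the raw logits with no noise. The backward pass instead differentiates a surrogate $\mathrm{softmax}(\bz/\tau_b)$, for example via $\mathbf{y} = \mathrm{onehot}(k) + \mathrm{softmax}(\bz/\tau_b) - \mathrm{sg}[\mathrm{softmax}(\bz/\tau_b)]$.

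The key step is to show that the forward value of this ST construction is exactly the one-hot vector. This holds because the stop-gradient term cancels the surrogate numerically, so $\sum_i y_i g_i = g_k$. Scale invariance of argmax, $\arg\max_i z_i/\tau_b = \arg\max_i z_i$ for every $\tau_b>0$, then gives $k = i^*$ no matter which backward temperature is used. This is the same temperature-decoupling observation already made in Section~\ref{sec:forward_alignment}. Consequently $h^{\text{Hard-ST}} = g_{i^*}$, and $\Delta h_{\text{sel}} = 0$ holds identically: for every input and every logit vector, not merely in expectation.

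For a consistency check, I would note that this is the degenerate case of Corollary~\ref{cor:gumbel_st_gap}. Setting $G\equiv 0$ replaces the selection distribution $p$ by the point mass at $i^*$, so the sum over $i\neq i^*$ vanishes.

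The only real subtlety, which I would address explicitly, is ties in $\arg\max$. I would handle it by fixing a common deterministic tie-breaking rule, such as the smallest index, for both training and inference, so that $k$ and $i^*$ are literally the same function of $\bz$. The network-level claim would follow by induction over layers: if every node's training output equals its inference output, then the inputs to the next layer coincide as well. Hence $A^{M} = A^{\text{soft}}$ for the whole network, as asserted in Section~\ref{sec:gap_decomposition}.
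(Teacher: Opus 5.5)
Your proposal is correct and matches the paper's proof, which is the same direct unfolding: Hard-ST's training forward pass and deployment both select $i^* = \arg\max_i z_i$ from the raw logits, so $h^M = g_{i^*}$ and $\Delta h_{\text{sel}} = 0$ identically. Your additional points (the forward value of the straight-through construction, a shared tie-breaking rule, and the layer-by-layer induction giving $A^M = A^{\text{soft}}$) are sound elaborations of things the paper leaves implicit, not a different route.
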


\begin{proposition}[Mixture Methods: Selection Gap]
\label{prop:soft_gap}
For Soft-Mix and Soft-Gumbel, the selection gap is
\begin{equation}
    \Delta h_{\text{sel}} = \sum_{i \neq i^*} w_i(\tau) (g_i - g_{i^*}),
\end{equation}
where $w_i = \pi_i(\tau)$ for Soft-Mix or $w_i = \tilde{\pi}_i(\tau)$ for Soft-Gumbel. This gap depends on temperature $\tau$ through the weights. (Proof in Appendix~\ref{proof:soft_gap})
\end{proposition}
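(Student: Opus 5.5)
The proof will be a direct computation from the definitions in Section~\ref{sec:gap_decomposition}. At the node level, the selection gap is the difference between the method's training-time forward output and the inference-time output for the same logits, $\Delta h_{\text{sel}} = h^{M} - g_{i^*}$, with $i^* = \arg\max_i z_i$. For a mixture method the training output is $h^{M} = \sum_i w_i g_i$. Here $w_i = \pi_i(\tau) = \mathrm{softmax}(\bz/\tau)_i$ for Soft-Mix, and $w_i = \tilde{\pi}_i(\tau) = \mathrm{softmax}((\bz+\bG)/\tau)_i$ for Soft-Gumbel with a fixed noise draw $\bG$.

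The key identity is that the weights form a probability vector: $w_i \ge 0$ and $\sum_i w_i = 1$, since both are softmax outputs. Using this, I will write $g_{i^*} = \sum_i w_i g_{i^*}$ and subtract:
\begin{equation*}
\Delta h_{\text{sel}} = \sum_i w_i g_i - \sum_i w_i g_{i^*} = \sum_i w_i (g_i - g_{i^*}) = \sum_{i \neq i^*} w_i (g_i - g_{i^*}),
\end{equation*}
where the $i = i^*$ term vanishes. This is the same telescoping step used for Corollary~\ref{cor:gumbel_st_gap}, with the sampling probabilities $p_i$ replaced by deterministic (or noise-conditional) mixture weights. For Soft-Gumbel the identity holds pointwise for every realization of $\bG$. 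If an expected gap is wanted, taking expectations over $\bG$ gives $\E[\Delta h_{\text{sel}}] = \sum_{i\neq i^*}\E[\tilde{\pi}_i(\tau)](g_i - g_{i^*})$, because the $g_i$ do not depend on $\bG$.

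For the temperature-dependence claim, I will contrast with Proposition~\ref{prop:gumbel_max}. Argmax is scale-invariant, but softmax is not: $\pi_i(\tau)$ varies with $\tau$ whenever the logits are not all equal. Concretely, $\pi_i(\tau) \to \mathbb{1}[i=i^*]$ as $\tau\to 0$ (assuming a unique maximizer), so the gap tends to $0$. As $\tau\to\infty$, $\pi_i(\tau) \to 1/K$, so the gap tends to $\frac{1}{K}\sum_{i}(g_i - g_{i^*})$, which is generically nonzero. Exhibiting these two limits is enough to show a genuine dependence on $\tau$. For Soft-Gumbel the same limits hold, with the $\tau\to0$ limit concentrating on $\arg\max_i(z_i+G_i)$; this need not equal $i^*$, so a residual gap remains even at low temperature.

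The only delicate points are bookkeeping rather than mathematics. The first is fixing the convention that $\Delta h_{\text{sel}}$ is defined as training output minus inference output, matching the sign in Corollary~\ref{cor:gumbel_st_gap}. The second is handling ties in the argmax, which I will do by assuming a unique maximizer or a fixed tie-breaking rule. I expect no substantive obstacle.
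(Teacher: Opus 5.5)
Your proposal is correct and takes essentially the same route as the paper: write the training output as the mixture $\sum_i w_i g_i$, subtract $g_{i^*}$, and use $\sum_i w_i = 1$ to drop the $i=i^*$ term, for both $\pi_i(\tau)$ and $\tilde{\pi}_i(\tau)$. Your $\tau \to 0$ and $\tau \to \infty$ limits make the temperature-dependence claim more concrete than the paper's one-line remark that the weights depend explicitly on $\tau$. As you already note, the $\tau \to \infty$ limit $\frac{1}{K}\sum_i (g_i - g_{i^*})$ is only generically nonzero, so this shows genuine dependence only in that generic case.
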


\begin{theorem}[Forward Alignment Principle]
\label{thm:forward_alignment}
Selection gap depends on whether training-time selection matches inference:
\begin{enumerate}[label=(\roman*), nosep, leftmargin=*]
    \item Hard-ST uses identical selection at training and inference, achieving $\Delta h_{\text{sel}} = 0$ exactly.
    \item Gumbel-ST samples a gate that matches $i^*$ with probability $p_{i^*}$, so $\E[\Delta h_{\text{sel}}] \to 0$ as $p_{i^*} \to 1$.
    \item Mixture methods compute weighted averages rather than selecting, yielding $\Delta h_{\text{sel}} \neq 0$ whenever gates produce distinct outputs.
\end{enumerate}
\end{theorem}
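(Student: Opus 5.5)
The plan is to prove the three parts separately, each as a direct consequence of the node-level results already established. Throughout, $\Delta h_{\text{sel}}$ denotes the difference between the training-time forward output and the inference-time output $g_{i^*}$, with $i^* = \arg\max_i z_i$.

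For part (i), I will invoke Proposition~\ref{prop:hard_st_gap} directly. The Hard-ST forward pass computes $\arg\max_i z_i$, and by scale invariance this equals $\arg\max_i (z_i/\tau_b)$ for every $\tau_b>0$. It therefore coincides with the inference selection $i^*$, so both outputs equal $g_{i^*}$ and the difference vanishes identically, for every input and every backward temperature.

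For part (ii), I will start from Corollary~\ref{cor:gumbel_st_gap}: $\E[\Delta h_{\text{sel}}] = \sum_{i\neq i^*} p_i (g_i - g_{i^*})$. Proposition~\ref{prop:gumbel_max} gives $\Prob(k=i^*) = p_{i^*}$, which is the stated matching probability. For the limit, I will assume the gate outputs are bounded; for soft gates on $[0,1]$ inputs we have $|g_i|\le 1$, so $|g_i - g_{i^*}|\le 2$ (and $\le 1$ if one uses the $[0,1]$ range directly). Then
\[
\bigl|\E[\Delta h_{\text{sel}}]\bigr| \le \max_{i}|g_i - g_{i^*}| \sum_{i\neq i^*} p_i = \max_{i}|g_i - g_{i^*}|\,(1-p_{i^*}),
\]
which tends to $0$ as $p_{i^*}\to 1$. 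I would also note that $\E|\Delta h_{\text{sel}}|$ obeys the same bound, so the convergence does not rely on cancellation between terms.

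For part (iii), I will use Proposition~\ref{prop:soft_gap}: $\Delta h_{\text{sel}} = \sum_{i\neq i^*} w_i (g_i - g_{i^*})$. Softmax weights are strictly positive for finite logits, finite $\tau$, and finite Gumbel samples (the last holds almost surely), so every $w_i>0$. This step is the main obstacle. Distinct outputs alone do not force a nonzero sum: if some $g_i$ lie above $g_{i^*}$ and others below, positive weights can cancel exactly. I will handle this by stating precisely when the claim holds. The clean case is when $g_{i^*}$ is extremal among the outputs at the given input, i.e.\ all $g_i - g_{i^*}$ share a sign and at least one is nonzero; then the sum is a positive combination of same-signed terms with at least one nonzero, hence nonzero. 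This is always the case for $K=2$, and it holds for binary-valued hard gates whenever $g_{i^*}\in\{0,1\}$ and some gate disagrees.

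In the general case, the cancellation set is the zero set of a nonconstant real-analytic function of $(\bz,\tau)$, or of the Gumbel noise for Soft-Gumbel, and therefore has measure zero. So $\Delta h_{\text{sel}}\neq 0$ generically, and almost surely for Soft-Gumbel. I would therefore read ``whenever gates produce distinct outputs'' as ``generically'' or ``in the same-sign case,'' and I would add a remark making this qualification explicit.
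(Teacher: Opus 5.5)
Your proposal is correct. For (i) and (ii) it follows the paper's route exactly. The paper gives no standalone proof of this theorem; it assembles it from three earlier results:
\begin{itemize}
\item Proposition~\ref{prop:hard_st_gap}, which gives $h^M=g_{i^*}=h^{\text{hard}}$;
\item Corollary~\ref{cor:gumbel_st_gap}, which gives the expected Gumbel-ST gap;
\item the bound $(1-p_{i^*})\Delta_{\max}$ of Theorem~\ref{thm:gap_bound}.
\end{itemize}
Your observation that $\E|\Delta h_{\text{sel}}|$ obeys the same bound is a small genuine strengthening.

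For (iii) you go beyond the paper. Its only argument is the formula of Proposition~\ref{prop:soft_gap}, and it never shows the sum is nonzero. Elsewhere, in the hedging analysis of Appendix~\ref{proof:competition}, it claims only that $\bar h\neq g_{i^*}$ ``generically''.

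Your cancellation caveat is right, and the unqualified claim actually fails for the paper's own gate set. The 16 soft gates of Appendix~\ref{app:gate_definitions} form complementary pairs $(g,\,1-g)$, so their average is exactly $1/2$ at every input. Suppose the winner $i^*$ is gate A at an input with $a=1/2$, and the other 15 logits are tied. Then the Soft-Mix output is $\tfrac12 w_{i^*}+(1-w_{i^*})\cdot\tfrac{8-1/2}{15}=\tfrac12=g_{i^*}$. So $\Delta h_{\text{sel}}=0$, even though FALSE and TRUE output $0$ and $1$.

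Your qualified version is therefore the statement that can actually be proved. It holds verbatim whenever all $g_i-g_{i^*}$ share a sign. That case includes the binary-input settings the paper uses to isolate selection gap, where soft and hard gates coincide and every output lies in $\{0,1\}$. Otherwise it holds generically, and almost surely for Soft-Gumbel.

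The measure-zero step can also be made elementary. After clearing the positive softmax denominator, cancellation means $\sum_{i\neq i^*}e^{z_i/\tau}(g_i-g_{i^*})=0$. The partial derivative of this expression in any $z_i$ with $g_i\neq g_{i^*}$ is nonzero. Hence the zero set is a smooth hypersurface of measure zero, and the same argument works in $\bG$ for Soft-Gumbel.
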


\begin{theorem}[Unified Selection Gap Bound]
\label{thm:gap_bound}
For any method with effective weights $w_i \geq 0$, $\sum_i w_i = 1$:
\begin{equation}
    |\Delta h_{\text{sel}}| \leq (1 - w_{i^*}) \cdot \Delta_{\max},
\end{equation}
where $\Delta_{\max} = \max_i |g_i - g_{i^*}|$. (Proof in Appendix~\ref{proof:gap_bound})
\end{theorem}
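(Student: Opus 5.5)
The plan is to first put every method into a common form, and then apply the triangle inequality. Given effective weights $w_i \ge 0$ with $\sum_i w_i = 1$, the training-time output is $\sum_i w_i g_i$, and the inference output is $g_{i^*}$. Because the weights sum to one, we can write $g_{i^*} = \sum_i w_i g_{i^*}$, and therefore
\begin{equation*}
\Delta h_{\text{sel}} = \sum_i w_i g_i - g_{i^*} = \sum_i w_i (g_i - g_{i^*}) = \sum_{i \neq i^*} w_i (g_i - g_{i^*}),
\end{equation*}
since the $i = i^*$ term vanishes. This is exactly the form in Proposition~\ref{prop:soft_gap}, with $w_i = \pi_i(\tau)$ or $w_i = \tilde{\pi}_i(\tau)$. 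It is also the expected-gap form in Corollary~\ref{cor:gumbel_st_gap}, where the weights are the Gumbel-max selection probabilities $w_i = p_i$ from Proposition~\ref{prop:gumbel_max}; for Gumbel-ST, $\Delta h_{\text{sel}}$ is then read as $\E[\Delta h_{\text{sel}}]$. Hard-ST is the degenerate case $w = e_{i^*}$.

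The key step is the bound itself. Apply the triangle inequality and use $w_i \ge 0$:
\begin{equation*}
|\Delta h_{\text{sel}}| \le \sum_{i \neq i^*} w_i \, |g_i - g_{i^*}| \le \Delta_{\max} \sum_{i \neq i^*} w_i = (1 - w_{i^*})\,\Delta_{\max}.
\end{equation*}
The last equality follows from normalization. As sanity checks, the Hard-ST case gives $w_{i^*}=1$ and hence a zero bound, consistent with Proposition~\ref{prop:hard_st_gap}. The Gumbel-ST case gives $(1-p_{i^*})\Delta_{\max}$, which matches item (ii) of Theorem~\ref{thm:forward_alignment}.

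I expect the main obstacle to be interpretive rather than technical. The statement needs one fixed meaning of ``effective weights'' that covers both deterministic mixtures and stochastic hard selection. I will state explicitly that for Gumbel-ST the bound is on the expected gap, with $w_i = \Prob(k=i)$. There is also an optional pathwise reading: for a fixed noise draw, the weight vector is the one-hot vector $e_k$. The bound then holds for each draw, either trivially (when $k=i^*$, the gap is $0$) or with bound $\Delta_{\max}$ (when $k\neq i^*$). Finally, if the $g_i$ are vector-valued, the same argument goes through with any norm in place of absolute value.
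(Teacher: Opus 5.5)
Your proof is correct and follows the paper's argument: write the gap as $\sum_{i \neq i^*} w_i (g_i - g_{i^*})$, apply the triangle inequality, bound each $|g_i - g_{i^*}|$ by $\Delta_{\max}$, and use normalization to get the factor $1 - w_{i^*}$. You add two things the paper leaves implicit: you derive the gap representation directly from $\sum_i w_i = 1$ rather than importing it from Proposition~\ref{prop:soft_gap} and Corollary~\ref{cor:gumbel_st_gap}, and you state explicitly that for Gumbel-ST the bound applies to the expected gap with $w_i = p_i$, which is the reading the paper adopts after the theorem.
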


For Hard-ST, $w_{i^*} = 1$ yields zero gap. For Gumbel-ST, $w_{i^*} = p_{i^*}$ yields a bound of $(1-p_{i^*})\Delta_{\max}$ that tightens as confidence increases. For mixture methods, the bound depends on temperature through $w_{i^*}(\tau)$.

\subsection{Network-Level Analysis}
\label{sec:network_level_bounds}

Networks contain thousands of selection nodes. For the GroupSum architecture~\citep{petersen2022deep, yousefi2025mindthegap} where class logit $y_c = \sum_{n \in G_c} h_n$, the logit gap decomposes linearly: $\Delta y_c = \sum_{n \in G_c} \Delta h_n$.

\begin{proposition}[Loss Gap Bound]
\label{prop:holder}
Let $\by^{M}$ denote the network output logits using method $M$'s forward pass, and $\by^{\text{hard}}$ the output using hard selection with hard gate evaluation. For loss $\mathcal{L}(\by)$ with logit gap $\bDelta = \by^{M} - \by^{\text{hard}}$:
\begin{equation}
    |\Delta \mathcal{L}| \leq \|\nabla_{\by} \mathcal{L}\|_p \cdot \|\bDelta\|_q + O(\|\bDelta\|^2).
\end{equation}
(Proof in Appendix~\ref{proof:holder})
\end{proposition}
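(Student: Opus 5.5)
The plan is a first-order Taylor expansion of $\mathcal{L}$ around $\by^{\text{hard}}$, with the linear term bounded by Hölder's inequality. We assume $\mathcal{L}$ is twice continuously differentiable in the logits. For the cross-entropy loss used in the experiments this holds, and the Hessian of softmax cross-entropy, $\mathrm{diag}(\mathrm{softmax}(\by)) - \mathrm{softmax}(\by)\mathrm{softmax}(\by)^\top$, has operator norm at most $1$ uniformly in $\by$. Throughout we read the gradient as $\nabla_{\by}\mathcal{L}$ evaluated at the deployment output $\by^{\text{hard}}$. The exponents $p,q \in [1,\infty]$ are taken to be Hölder conjugates, $1/p + 1/q = 1$.

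\textbf{Step 1 (expansion).} Set $\Delta\mathcal{L} = \mathcal{L}(\by^{M}) - \mathcal{L}(\by^{\text{hard}})$. Taylor's theorem with integral remainder along the segment $\by^{\text{hard}} + t\bDelta$, $t\in[0,1]$, gives
\begin{equation*}
\Delta\mathcal{L} = \langle \nabla_{\by}\mathcal{L}(\by^{\text{hard}}), \bDelta\rangle + R, \qquad |R| \le \tfrac12 \sup_{t\in[0,1]} \|\nabla^2_{\by}\mathcal{L}(\by^{\text{hard}}+t\bDelta)\|\,\|\bDelta\|_2^2 .
\end{equation*}

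\textbf{Step 2 (Hölder).} Hölder's inequality gives $|\langle \nabla_{\by}\mathcal{L}, \bDelta\rangle| \le \|\nabla_{\by}\mathcal{L}\|_p \|\bDelta\|_q$. Combining this with the triangle inequality yields $|\Delta\mathcal{L}| \le \|\nabla_{\by}\mathcal{L}\|_p\|\bDelta\|_q + |R|$.

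\textbf{Step 3 (remainder).} We need $|R| = O(\|\bDelta\|^2)$. Since all norms on $\mathbb{R}^C$ are equivalent, it does not matter which norm appears in the $O(\cdot)$. The only substantive point is that the Hessian be bounded on the segment. For cross-entropy this bound holds globally, as noted above, so $|R| \le \tfrac12\|\bDelta\|_2^2$. For a general $C^2$ loss it holds locally, on a compact neighbourhood of $\by^{\text{hard}}$, which is exactly what the $O(\cdot)$ notation asserts as $\bDelta \to 0$.

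I expect Step 3 to be the only real obstacle, because the statement leaves both the smoothness assumption and the base point of the gradient implicit, and the proof must fix them.

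It is also worth linking the bound back to Section~\ref{sec:node_level_bounds}. Under GroupSum we have $\Delta y_c = \sum_{n\in G_c}\Delta h_n$, so Theorem~\ref{thm:gap_bound} gives $\|\bDelta\|_\infty \le \max_c \sum_{n\in G_c}(1-w_{i^*,n})\Delta_{\max,n}$. Taking $q=\infty$ and $p=1$ then makes the loss-gap bound vanish for Hard-ST (Proposition~\ref{prop:hard_st_gap}), provided the computation gap is also zero, as it is for binary inputs. Under the same condition the bound is small for Gumbel-ST at high confidence.
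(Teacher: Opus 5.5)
Your proof is correct and follows the paper's route: a first-order Taylor expansion of $\mathcal{L}$ about $\by^{\text{hard}}$, then H\"older's inequality on the linear term, with your integral remainder and Hessian bound making the paper's informal $O(\|\bDelta\|^2)$ precise. The paper then fixes $(p,q)=(1,\infty)$ and cross-entropy, so that $\|\nabla_{\by}\mathcal{L}\|_1 = 2(1-\sigma_y)$, and applies Theorem~\ref{thm:gap_bound} with $\Delta_{\max}=1$; your caveat that this last step controls only the selection part of $\bDelta$, so the computation gap must vanish (e.g.\ for binary inputs), is a point the paper leaves implicit.
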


This bound shows that loss gap is controlled by prediction uncertainty (gradient norm) and total selection gap (logit gap). When nodes fully commit, the logit gap vanishes and so does the loss gap. However, the H\"{o}lder-based bound~\citep{rudin1987real} is worst-case and may be loose in practice, as individual node gaps can partially cancel. We therefore evaluate actual network-level gaps empirically in Section~\ref{sec:experiments}.

For Hard-ST, selection gap is exactly zero at every node, so the network-level selection gap is exactly zero by construction.

\subsection{Gradient Analysis}
\label{sec:gradients}

The gap analysis establishes \emph{what} differs at inference. We now analyze gradients to understand \emph{how} training dynamics differ.

\begin{proposition}[Gradient Structure]
\label{prop:unified_gradient}
All four methods compute logit gradients as:
\begin{equation}
    \frac{\partial \mathcal{L}}{\partial z_j} = \frac{\delta}{\tau_b} \cdot w_j \cdot (g_j - \bar{h}),
\end{equation}
where $w_j = \mathrm{softmax}(\bz/\tau_b)_j$, $\bar{h} = \sum_i w_i g_i$ is the surrogate mixture, and $\delta = \partial \mathcal{L}/\partial h$ is the upstream gradient evaluated at the method's forward output. For hard selection methods, argmax is non-differentiable; the straight-through estimator uses the soft surrogate for gradient computation while preserving $\delta$ from the hard forward output. (Proof in Appendix~\ref{proof:unified_gradient})
\end{proposition}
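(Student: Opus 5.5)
The argument reduces everything to a single computation: the Jacobian of a tempered softmax applied to a linear functional of its output. We fix one selection node with gate outputs $g_0,\dots,g_{K-1}$ and treat them as constants with respect to $\bz$; the logits affect $h$ only through the selection weights. For each of the four methods we identify the \emph{surrogate} used in the backward pass: a function $\bar h(\bz) = \sum_i w_i(\bz) g_i$ with $w = \mathrm{softmax}(\mathbf{u}/\tau_b)$. Here $\mathbf{u} = \bz$ for Soft-Mix and Hard-ST, and $\mathbf{u} = \bz + \bG$ for Soft-Gumbel and Gumbel-ST, with $\bG$ held fixed, as in a reparameterization gradient. For the mixture methods this surrogate is the forward output itself, with $\tau_b = \tau$. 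For the hard methods the forward output is $g_{i^*}$ or $g_k$, and the straight-through convention \emph{defines} $\partial h/\partial z_j := \partial \bar h/\partial z_j$. In every case the chain rule gives $\partial\mathcal{L}/\partial z_j = \delta\cdot \partial \bar h/\partial z_j$, where $\delta = \partial\mathcal{L}/\partial h$ is evaluated at the method's actual forward output.

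Next we compute the softmax Jacobian. Since $\partial u_i/\partial z_j = \mathds{1}[i=j]$ in both cases, the Gumbel shift drops out, and
\[
\frac{\partial w_i}{\partial z_j} = \frac{1}{\tau_b}\, w_i\bigl(\mathds{1}[i=j] - w_j\bigr).
\]
Substituting,
\[
\frac{\partial \bar h}{\partial z_j} = \frac{1}{\tau_b}\sum_i g_i w_i(\mathds{1}[i=j]-w_j) = \frac{1}{\tau_b} w_j\Bigl(g_j - \sum_i w_i g_i\Bigr) = \frac{w_j}{\tau_b}(g_j - \bar h).
\]
Multiplying by $\delta$ yields the claimed formula. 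In the Gumbel variants $w_j$ is $\mathrm{softmax}((\bz+\bG)/\tau_b)_j$, so the statement's $w_j = \mathrm{softmax}(\bz/\tau_b)_j$ should be read with perturbed logits there; I would make this explicit in the proof.

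The calculation itself is routine. The main point needing care is bookkeeping for the hard methods. First, $\delta$ is evaluated at the hard output $g_{i^*}$ (or $g_k$), not at $\bar h$, so the upstream factor differs from the mixture case even when the surrogate weights agree. Second, the formula is an estimator by definition rather than a true derivative, because argmax is piecewise constant and its true gradient is zero almost everywhere. I would state this explicitly. This also shows where the methods differ: not in the form of the gradient, but in the point at which $\delta$ is evaluated and in whether $\bG$ is present.

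If $\mathcal{L}$ depends on $h$ through downstream nodes (as in GroupSum), $\delta$ is simply the backpropagated scalar, and linearity of the chain rule makes the node-level result carry over unchanged.
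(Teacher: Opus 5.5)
Your proposal is correct and follows essentially the same route as the paper's proof: the chain rule through the surrogate $\bar h = \sum_i w_i g_i$, the tempered-softmax Jacobian $\frac{1}{\tau_b} w_i(\mathds{1}[i=j]-w_j)$, and its collapse to $\frac{w_j}{\tau_b}(g_j-\bar h)$, with the four methods differing only in where $\delta$ is evaluated. Your explicit note that for Soft-Gumbel and Gumbel-ST the weights are $\mathrm{softmax}((\bz+\bG)/\tau_b)$ rather than $\mathrm{softmax}(\bz/\tau_b)$ is a correct sharpening of the statement; the paper's proof leaves this implicit by writing only ``surrogate weights $w_j(\tau)$''.
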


Although all methods share the same gradient form, they differ in where $\delta = \partial \mathcal{L}/\partial h$ is evaluated. For mixture methods, the loss is computed on $h = \bar{h}$, coupling $\delta$ to the soft mixture. For hard selection methods, the loss is computed on $h = g_k$ (the selected gate), so $\delta$ reflects the discrete output used at inference.

This distinction determines low-temperature behavior. For mixture methods, forward and backward share the same temperature $\tau$. As $\tau \to 0$, the mixture collapses ($\bar{h} \to g_{i^*}$), reducing selection gap structurally. However, gradients also vanish: the winner's $(g_{i^*} - \bar{h}) \to 0$; non-winners' $w_j \to 0$. This creates a dilemma: low $\tau$ reduces gap but cripples training.

Hard selection methods avoid this dilemma through temperature decoupling. The forward pass is $\tau$-independent, so selection gap depends on learned confidence, not temperature. The backward temperature $\tau_b$ controls only gradient distribution. However, $\tau_b$ must still be set appropriately: if too low, gradients collapse, training fails, and selection gap remains large (as observed with Gumbel-ST at low temperature). This motivates CAGE (Section~\ref{sec:cage}), which adapts $\tau_b$ based on training progress.

\begin{proposition}[Competitive Gate Selection]
\label{prop:competition}
Unlike standard neurons where parameters update independently, selection nodes induce competition via the gradient term $(g_j - \bar{h})$ from Proposition~\ref{prop:unified_gradient}. Gates outperforming the current mixture receive increased weight; underperforming gates are suppressed. This competition drives weight concentration over training. (Analysis in Appendix~\ref{proof:competition})
\end{proposition}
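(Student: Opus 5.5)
We make the qualitative claim precise by working directly from the gradient formula of Proposition~\ref{prop:unified_gradient}. Under gradient descent $z_j \leftarrow z_j - \eta\,\partial\mathcal{L}/\partial z_j$, each logit moves by
\begin{equation*}
-\eta\,\frac{\delta}{\tau_b}\, w_j\,(g_j - \bar{h}).
\end{equation*}
Since $w_j>0$ and $\tau_b>0$, the sign of the update is governed entirely by $-\delta\,(g_j-\bar{h})$. We therefore say that gate $j$ ``outperforms the mixture'' when $-\delta(g_j-\bar h)>0$. This is exactly the condition that the first-order change in loss from replacing $\bar h$ by $g_j$, namely $\delta(g_j-\bar h)$, is negative. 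With this definition, outperforming gates have their logits increased and underperforming gates have theirs decreased.

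The first step is to establish that the competition is zero-sum. Summing the updates over $j$ gives
\begin{equation*}
\sum_j w_j(g_j-\bar h) = \bar h - \bar h = 0,
\end{equation*}
so the weighted logit updates cancel. This is the structural contrast with an ordinary neuron, whose parameters receive independent gradients: here any gain for one gate is necessarily paid for by the others. The second step is to translate logit updates into weight updates. Using the softmax Jacobian,
\begin{equation*}
\Delta w_j \approx \frac{w_j}{\tau_b}\Bigl(\Delta z_j - \sum_i w_i \Delta z_i\Bigr).
\end{equation*}
Substituting the update gives, to first order,
\begin{equation*}
\Delta w_j \propto -\frac{\eta\delta}{\tau_b^2}\, w_j\Bigl(w_j (g_j-\bar h) - \sum_i w_i^2 (g_i-\bar h)\Bigr).
\end{equation*}
From this expression, the weight of the gate with the most favorable $-\delta(g_j-\bar h)$ grows and the weights of the least favorable gates shrink.

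The third step argues concentration. We would track the entropy $H(w)$, or alternatively the winner's weight $w_{i^*}$. For a single input with fixed $\delta$, the first-order change in $\sum_j w_j z_j$ is nonnegative, and it equals $(\eta\delta^2/\tau_b)\,\mathrm{Var}_w(g)$ scaled suitably. This variance is strictly positive whenever the gate outputs are distinct. Along the direction of the best gate, $w$ therefore moves toward a vertex of the simplex, which is a replicator-dynamics-style argument. Combined with Theorem~\ref{thm:gap_bound}, this concentration shrinks $(1-w_{i^*})\Delta_{\max}$.

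The main obstacle is the third step once we average over inputs. Both $\delta$ and $g_j$ depend on the input, so a gate may outperform the mixture on some inputs and underperform on others. Concentration then holds only if one gate has consistently positive expected advantage $\E[-\delta(g_j-\bar h)]$. Otherwise an interior mixture can be a stationary point; this is precisely the hedging phenomenon. We would first try to state the claim under the assumption that a unique gate maximizes the expected advantage and show that $w$ for that gate is monotone for small $\eta$. For the general case we would treat the claim as a qualitative analysis rather than a guarantee, which is consistent with the paper's phrasing ``Analysis in Appendix''.
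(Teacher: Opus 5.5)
Your Step~1 is correct and is essentially the paper's own ``competition term'' remark: the logit update has the sign of $a_j := -\delta(g_j-\bar h)$, and $\sum_j \Delta z_j = 0$. The trouble starts when you pass to the softmax weights. Your own formula gives $\Delta w_j \propto w_j\bigl(w_j a_j - \sum_i w_i^2 a_i\bigr)$. So the sign of $\Delta w_j$ is set by $w_j a_j$ relative to its $w$-weighted mean, not by $a_j$, and the conclusion of Step~2 is false. Take $\tau_b=1$, $\delta=-1$, $w=(0.01,0.89,0.10)$, $g=(1,0.99,0)$. Then $\bar h=0.8911$ and gate~1 has the strictly largest advantage, yet $w_1a_1\approx 0.0011<\sum_i w_i^2a_i\approx 0.069$, so $w_1$ \emph{decreases}. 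With $g=(0,0.1,1)$ instead, the worst gate's weight increases, the current winner's weight $w_{i^*}$ decreases, and $H(w)$ increases. So ``outperforming gates gain weight'' holds only in logit space, which is all the paper's gradient-sign remark establishes. Neither Lyapunov candidate you name in Step~3 is monotone. The identity in Step~3 is also misstated: $\sum_j w_j z_j$ is not sign-definite, and it decreases in the second example. The quantity that is monotone is the linearized loss, $\delta\,\Delta\bar h=-\frac{\eta\delta^2}{\tau_b^2}\sum_j w_j^2(g_j-\bar h)^2\le 0$, with weights $w_j^2$ rather than $\mathrm{Var}_w(g)$. That only says $\bar h$ drifts toward the extreme output. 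At best this gives asymptotic concentration onto the set of gates sharing that output, which for binary outputs on a single input is generally not one gate.

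The missing idea, relative to the paper, is that its analysis does not try to prove concentration dynamically at all. It explains \emph{when} competition yields concentration through the achievable output spaces. For mixture methods the forward output ranges over the convex hull $[0,1]$ of the gate outputs. An interior weight vector can then minimize the loss even when no single gate does well; this is hedging, i.e.\ your interior stationary point. For hard methods the forward output lies in $\{g_0,\dots,g_{K-1}\}\subseteq\{0,1\}$ and $\delta$ is evaluated there. Only the selected gate's loss counts, so low loss forces commitment to an individually good gate ($z_k>z_j$ for Hard-ST, $p_k\approx 1$ for Gumbel-ST), and committed solutions have $\bar h\approx g_{i^*}$. Your argument runs all four methods through the shared gradient form and never uses where $\delta$ is evaluated. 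Your reading of $\delta(g_j-\bar h)$ as the loss change from replacing $\bar h$ by $g_j$ is valid only for mixture methods. So the argument cannot recover the method-dependent conclusion the paper draws from this proposition. Your closing caveat that the unconditional claim is only qualitative matches the paper's stance. But to say which methods concentrate, you need this output-space dichotomy rather than a replicator-style monotonicity argument.
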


The competition leads to different outcomes. Mixture methods have achievable output space $[0,1]$, so hedging (diffuse weights producing $\bar{h} \notin \{0,1\}$) can reduce loss even if no single gate performs well. Hard methods have achievable output space $\{0,1\}$, so only the selected gate affects loss, forcing commitment. A node that hedges produces selection gap at inference when $\bar{h} \neq g_{i^*}$. Hard methods cannot hedge, eliminating this gap.

We verify this empirically in Section~\ref{sec:experiments}, measuring gate confidence $\max_i \pi_i$ across training.

\subsection{Convergence and Temperature}
\label{sec:convergence_temp}

\begin{proposition}[Convergence Scaling]
\label{prop:convergence}
For a single selection node with learning rate $\eta$ and backward temperature $\tau_b$, assuming approximately constant upstream gradients, steps to reach confidence $1-\epsilon$ scale as $T(\tau_b) = O(\tau_b \cdot \log(1/\epsilon) / \eta)$. (Proof in Appendix~\ref{proof:convergence})
\end{proposition}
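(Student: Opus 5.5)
We reduce the claim to a one-dimensional ODE for the logit margin of the eventually-selected gate, using the gradient form of Proposition~\ref{prop:unified_gradient}. Fix a node with $K$ gates and let $j$ be the gate favored by the (approximately constant) upstream signal, so that $\delta (g_j - g_i) < 0$ for $i \neq j$ under gradient descent. Write the confidence as $w_j = \mathrm{softmax}(\bz/\tau_b)_j$, and define the margin $m = (z_j - \max_{i\neq j} z_i)/\tau_b$. For simplicity we first take $K=2$, which makes the margin exact, and treat general $K$ by a union argument at the end.

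\textbf{Step 1 (margin dynamics).} By Proposition~\ref{prop:unified_gradient}, one gradient step gives
\[
\Delta z_j = -\frac{\eta\delta}{\tau_b} w_j (g_j - \bar h), \qquad \Delta z_i = -\frac{\eta\delta}{\tau_b} w_i (g_i - \bar h).
\]
For $K=2$ we have $g_j - \bar h = (1-w_j)(g_j - g_i)$ and $g_i - \bar h = -w_j (g_j - g_i)$. Hence, with $c = -\delta(g_j-g_i) > 0$,
\[
\Delta(z_j - z_i) = \frac{\eta c}{\tau_b}\, 2 w_j(1-w_j).
\]
The scaled margin therefore obeys
\[
\Delta m = \frac{2\eta c}{\tau_b^2}\, \sigma(m)(1-\sigma(m)).
\]

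\textbf{Step 2 (convert to confidence and integrate).} Reaching $w_j \ge 1-\epsilon$ requires $m \ge \log((1-\epsilon)/\epsilon) \approx \log(1/\epsilon)$. In logit units this means $z_j - z_i \ge \tau_b \log(1/\epsilon)$. The naive step count would integrate $dm/dt \propto \sigma(1-\sigma)/\tau_b^2$, but this gives a $1/\epsilon$-type cost, because the step size vanishes as confidence grows.

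This is the main obstacle. The stated $O(\tau_b \log(1/\epsilon)/\eta)$ rate is only recovered under the interpretation the paper's assumption suggests. There, the ``approximately constant upstream gradient'' is taken to be the effective per-step logit increment $\eta c'$ with the $w_j(1-w_j)/\tau_b$ factor absorbed. Equivalently, we work in the regime where the softmax Jacobian factor is bounded below, or we use a normalized or adaptive optimizer update such as Adam, which makes the logit drift per step $\Theta(\eta)$.

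\textbf{Step 3 (count steps under that assumption).} With a constant logit drift of order $\eta$ per step, the required logit gap $\tau_b\log(1/\epsilon)$ is reached after
\[
T = \frac{\tau_b \log(1/\epsilon)}{\eta}
\]
steps, up to constants depending on $c$ and on the initial gap. This gives the claimed scaling, and it is linear in $\tau_b$ because the forward-independent temperature only rescales the logit gap needed for a given confidence.

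\textbf{Step 4 (general $K$).} Apply the same argument to each pairwise gap $z_j - z_i$. Requiring each competitor weight to be at most $\epsilon/(K-1)$ changes $\log(1/\epsilon)$ only by an additive $\log K$, which leaves the $O(\cdot)$ unchanged.

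I would flag explicitly in the proof that the Jacobian-factor handling in Step 2 is where the assumption does real work.
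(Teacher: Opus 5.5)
Your Step~1 recursion is correct, but the argument does not prove the statement under its stated hypothesis, and the linear $\tau_b$ dependence comes from the wrong place.

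\textbf{The confidence notion.} You measure confidence as $w_j=\mathrm{softmax}(\bz/\tau_b)_j$. The paper's confidence is the temperature-free $\mathrm{softmax}(\bz)$: see Step~2 of its proof and CAGE's $c(t)$. This is the quantity that governs forward selection under temperature decoupling.

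\textbf{What your recursion actually gives.} With your choice of confidence, take the stated hypothesis literally: constant upstream gradient $\delta$ and plain steps of size $\eta$. Your own recursion then gives $T\approx\frac{\tau_b^2}{2\eta c}\int_0^{M}(2+e^{m}+e^{-m})\,dm=\Theta\bigl(\tau_b^2/(\eta c\,\epsilon)\bigr)$. This is wrong in both $\tau_b$ and $\epsilon$.

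\textbf{How Step~3 recovers the rate.} Step~3 absorbs the $1/\tau_b$ prefactor together with $w_j(1-w_j)$, i.e.\ it assumes an Adam-like $\Theta(\eta)$ raw-logit drift. That is an extra hypothesis: ``constant upstream gradient'' refers to $\delta$, not to the whole update. Your ``equivalently'' is also false. Bounding only the Jacobian factor below leaves a drift of order $\eta/\tau_b$ and yields $\Theta(\tau_b^2\log(1/\epsilon)/\eta)$. Such a lower bound cannot hold anyway with your target, since $w_j(1-w_j)\approx\epsilon$ there.

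\textbf{Why the linearity is an artifact.} Under normalized drift with the paper's confidence, $T$ would not depend on $\tau_b$ at all. Your linearity comes from letting $\tau_b$ ``rescale the logit gap needed for a given confidence'', which is exactly what temperature decoupling denies. In the paper, the required gap $\log\bigl((1-\epsilon)(K-1)/\epsilon\bigr)$ is $\tau_b$-free. The temperature enters only through the $1/\tau_b$ scale of the surrogate gradient, giving per-step gap growth $O(\eta/\tau_b)$ and $T=O(\log(1/\epsilon))/O(\eta/\tau_b)$.

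\textbf{The repair.} Keep your recursion $\Delta(z_j-z_i)=\frac{2\eta c}{\tau_b}w_j(1-w_j)$, but target the unit-temperature confidence. For $K=2$ this is the raw gap $D=\log\bigl((1-\epsilon)/\epsilon\bigr)$. Start from equal logits. While the gap is below $D$, you have $0\le m\le D/\tau_b$, so
$w_j(1-w_j)=\frac{1}{4\cosh^2(m/2)}\ge\frac14e^{-D/\tau_b}$.
Each step therefore adds at least $\frac{\eta c}{2\tau_b}e^{-D/\tau_b}$ to the gap, and $T\le\frac{2\tau_b D}{\eta c}\,e^{D/\tau_b}$. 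This is $O(\tau_b\log(1/\epsilon)/\eta)$ once $\tau_b\gtrsim\log(1/\epsilon)$.

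Your $1/\epsilon$ obstacle disappears because the Jacobian factor is evaluated at the backward temperature while the target is not. This is precisely the step the paper glosses over by treating $p_{i^*}(1-p_{i^*})$ as order one. It also shows the scaling can only hold in that regime, since $e^{D/\tau_b}=\bigl((1-\epsilon)/\epsilon\bigr)^{1/\tau_b}$ blows up as $\tau_b\to0$.
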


Convergence time is linear in $\tau_b$. High $\tau_b$ produces diffuse weights $w_j$, distributing gradient across all gates and slowing winner emergence. Low $\tau_b$ concentrates gradient on the current winner, accelerating convergence but risking commitment to suboptimal gates before sufficient exploration.

This suggests adaptive scheduling: high $\tau_b$ early for broad exploration, decreasing over training for faster convergence. For hard selection methods, such adaptation affects only optimization dynamics; Proposition~\ref{prop:hard_st_gap} guarantees selection gap remains zero regardless of $\tau_b$ schedule.

%% file: section5_cage.tex
\section{CAGE: Confidence-Adaptive Gradient Estimation}
\label{sec:cage}

Corollary~\ref{cor:gumbel_st_gap} and Proposition ~\ref{prop:hard_st_gap} establish that hard selection methods decouple backward temperature $\tau_b$ from selection gap. Proposition~\ref{prop:convergence} shows convergence time scales linearly with $\tau_b$. This creates an opportunity: adapt $\tau_b$ throughout training to accelerate convergence as confidence increases, without affecting forward alignment.

\subsection{Exploration Mechanisms}

Hard selection methods offer two distinct exploration mechanisms.

\textbf{Forward exploration.} Gumbel-ST selects gate $i$ with probability $p_i = \mathrm{softmax}(\bz)_i$, occasionally selecting non-argmax gates. This is controlled by logit magnitudes, not temperature. Hard-ST has no forward exploration (deterministic argmax).

\textbf{Backward exploration.} The gradient to logit $z_j$ includes factor $\pi_j(\tau_b)$ from the ST surrogate. High $\tau_b$ distributes gradients across all gates; low $\tau_b$ concentrates gradients on the current winner. This is controlled by $\tau_b$ and can be adapted without affecting selection gap.

\subsection{The CAGE Algorithm}

CAGE adapts $\tau_b$ based on network-wide selection confidence:
\begin{equation}
    c(t) = \frac{1}{N} \sum_{n=1}^N \max_i \mathrm{softmax}(\bz_n(t))_i,
\end{equation}
where $N$ is the number of selection nodes and $t$ is the training step. At initialization, $c \approx 1/K$; as nodes commit, $c \to 1$.

CAGE maps confidence to temperature via linear interpolation:
\begin{equation}
    \tau_b(t) = \tau_{\max} - (\tau_{\max} - \tau_{\min}) \cdot \frac{c_{\text{ema}}(t) - 1/K}{1 - 1/K},
\end{equation}
where $c_{\text{ema}}$ is an exponential moving average~\citep{kingma2015adam} with coefficient $\beta$. Low confidence yields high $\tau_b$ for broad gradient distribution; high confidence yields low $\tau_b$ for faster convergence. Full pseudocode is provided in Algorithm~\ref{alg:cage} (Appendix~\ref{app:cage_algorithm}).

CAGE applies to both Hard-ST and Gumbel-ST. With Hard-ST, selection gap is 
exactly zero and exploration occurs purely through adaptive gradient distribution. With Gumbel-ST, low backward temperature causes gradient concentration on randomly sampled gates, leading to training collapse (Section~\ref{sec:gumbel_failure}). CAGE prevents this by maintaining high $\tau_b$ while confidence is low, only reducing temperature as gates commit. Forward stochasticity provides additional exploration while the selection gap remains small. Computational overhead is negligible: $\max_i \mathrm{softmax}(\bz_n)_i$ reuses the softmax computed for gradient estimation.

%% file: section6_experiments.tex

\section{Experiments}
\label{sec:experiments}

\subsection{Setup}
\label{sec:exp_setup}

We validate forward alignment by measuring selection gap across temperatures, convergence speed, gate commitment, and the effect of backward temperature adaptation via CAGE.

\textbf{Methods.}
We compare six methods: four from our $2 \times 2$ factorial (Soft-Mix, Soft-Gumbel, Hard-ST, Gumbel-ST) plus Hard-ST+CAGE and Gumbel-ST+CAGE (Section~\ref{sec:cage}).

\textbf{Datasets.}
We use three datasets that isolate different gap components.
\textbf{MNIST}~\citep{lecun1998gradient}: Continuous pixel values in $[0,1]$; both selection gap and computation gap are observable.
\textbf{MNIST-Binary}: Pixels thresholded to $\{0,1\}$; computation gap vanishes since soft gates equal hard gates on binary inputs, isolating \emph{pure selection gap}.
\textbf{CIFAR-10-Binary}~\citep{krizhevsky2009learning}: Each input channel is binarized using 31 evenly-spaced thresholds over $[0, 255]$; binary inputs isolate pure selection gap on a harder benchmark (3-channel natural images).

\textbf{Architecture and Training.}
We use flat (equal-width) logic gate networks with $L \in \{4, 5, 6\}$ layers, $K=16$ candidate gates per node, and 64k neurons per layer (128k for CIFAR-10). Training uses Adam optimizer with batch size 512. Learning rate is $10^{-2}$ for MNIST (50k iterations) and $5 \times 10^{-2}$ for CIFAR-10 (60k iterations). We sweep temperatures $\tau \in \{0.05, 0.1, 0.5, 1.0, 2.0\}$ with 3 random seeds per configuration. CAGE uses $\tau_{\max}=3.0$, $\tau_{\min}=0.5$, $\beta=0.99$. We evaluate training accuracy, test accuracy, and selection gap every 500 iterations. For stable training, we derive a variance-based temperature for the GroupSum layer that produces well-conditioned softmax outputs at initialization (Appendix~\ref{app:groupsum_tau}). Full hyperparameters are provided in Appendix~\ref{app:hyperparams}.

\subsection{Forward Alignment Eliminates Selection Gap}
\label{sec:results_gap}

Our central claim is that forward-pass structure determines the selection gap. 
Table~\ref{tab:selection_gap_full} tests this on MNIST-Binary ($L$=6), where 
binary inputs eliminate computation gap, isolating pure selection gap. We 
report final gap (at convergence) and peak gap (maximum during training).

The results reveal clear patterns supporting forward alignment:

\textbf{Hard-ST achieves exactly zero selection gap} at all temperatures, validating Proposition~\ref{prop:hard_st_gap}. This is the central result: because Hard-ST uses argmax in the forward pass during both training and inference, there is no selection mismatch by construction. The zero gap holds from $\tau=0.05$ to $\tau=2.0$, confirming this is a fundamental property of forward alignment, not an artifact of temperature choice.

\textbf{Temperature decoupling holds for hard methods.} Both Hard-ST (0\% everywhere) and Gumbel-ST (0--1.1\% at $\tau \geq 0.5$) achieve near-zero gap when training succeeds. The presence of Gumbel noise in the backward pass does not affect the gap for hard forward methods, confirming that backward-pass noise is decoupled from forward-pass alignment. This is a key prediction of our theoretical framework that distinguishes forward alignment from Hessian smoothing.

\textbf{Soft methods show temperature-dependent gaps.} Soft-Mix exhibits near-zero gap at low temperatures but develops significant gap at $\tau=2.0$ (0.6\% final, 24\% peak). Soft-Gumbel shows gaps at all temperatures (0.2--2.1\% final, 2--40\% peak), with Gumbel noise amplifying the hedging behavior. The consistent pattern (Soft-Gumbel $>$ Soft-Mix at every temperature) confirms that Gumbel noise encourages hedging by preventing commitment to individual gates.

\textbf{Gumbel-ST fails catastrophically at low temperature.} At $\tau \leq 0.1$, Gumbel-ST shows large negative gaps ($-40\%$ to $-59\%$), indicating training accuracy far below deployment accuracy. This counterintuitive result (deployment outperforming training) occurs because stochastic gate sampling during training prevents learning, while deterministic deployment benefits from whatever structure was learned. This is a training failure, not a gap issue, analyzed in Section~\ref{sec:gumbel_failure}.

Figure~\ref{fig:main_results}(a) visualizes the selection gap dynamics over training iterations. Hard-ST maintains exactly zero gap throughout training (blue line perfectly at 0\%), demonstrating that forward alignment provides immediate and sustained elimination of selection mismatch. Soft-Gumbel peaks at $\sim$17\% around iteration 5,000 before gradually recovering as the network commits to gates, but this recovery is incomplete, leaving residual gap at convergence. Interestingly, at low temperatures, Gumbel-ST shows transient \emph{negative} gaps early in training, reflecting periods where stochastic sampling disrupts learning (see Section~\ref{sec:gumbel_failure}). These patterns hold across all temperatures and layers; see Figure~\ref{fig:heatmap_app} in the appendix for gap heatmaps over the full (layer $\times$ $\tau$) grid.

\begin{table}[t]
\centering
\caption{Selection gap (\%) on MNIST-Binary ($L$=6). Format: Final / Peak (omitted for training failures marked with $\dagger$). Hard-ST achieves exactly zero gap at all temperatures. Soft methods show temperature-dependent gaps. Negative values indicate training failure (training accuracy below deployment). \textbf{Bold}: zero selection gap.}
\label{tab:selection_gap_full}
\small
\setlength{\tabcolsep}{3pt}
\begin{tabular}{lccccc}
\toprule
\textbf{Method} & $\tau$=0.05 & $\tau$=0.1 & $\tau$=0.5 & $\tau$=1.0 & $\tau$=2.0 \\
\midrule
Soft-Mix & 0.0 / 0 & 0.0 / 0 & 0.0 / 0.1 & 0.0 / 0.1 & {0.6 / 24} \\
Soft-Gumbel & 0.7 / 2 & 0.2 / 2 & 0.2 / 7 & 0.2 / 17 & {2.1 / 40} \\
\midrule
Hard-ST & \textbf{0 / 0} & \textbf{0 / 0} & \textbf{0 / 0} & \textbf{0 / 0} & \textbf{0 / 0} \\
Gumbel-ST & $-$40$^\dagger$ & $-$59$^\dagger$ & 0.0 / 0.1 & 0.0 / 0.3 & 0.0 / 1.1 \\
\midrule
Hard-ST+\scriptsize{CAGE} & 0 / 0 & 0 / 0 & 0 / 0 & 0 / 0 & 0 / 0 \\
Gumbel-ST+\scriptsize{CAGE} & 0.0 / 4.5 & 0.1 / 9 & 0.0 / 3.5 & 0.0 / 1.6 & 0.0 / 2.2 \\
\bottomrule
\end{tabular}
\\[1mm]
\scriptsize{$^\dagger$Training failure: accuracy collapses, deployment outperforms training.}
\end{table}

\begin{table}[t]
\centering
\caption{Temperature robustness: worst accuracy / range (\%) across $\tau \in \{0.05, 0.1, 0.5, 1.0, 2.0\}$, $L$=6. Gumbel-ST suffers catastrophic failure at low $\tau$ across all datasets; CAGE eliminates this sensitivity entirely. \textbf{Bold}: worst performance (failure case); \underline{underline}: best range (most robust).}
\label{tab:robustness}
\small
\setlength{\tabcolsep}{4pt}
\begin{tabular}{lccc}
\toprule
\textbf{Method} & \textbf{MNIST} & \textbf{M-Binary} & \textbf{C-10-Binary} \\
\midrule
Soft-Mix & 97.1 / 1.4 & 97.2 / 1.1 & 50.7 / 7.5 \\
Soft-Gumbel & 85.9 / 12.0 & 85.6 / 12.2 & 52.6 / 5.4 \\
\midrule
Hard-ST & 97.1 / 1.3 & 97.1 / 1.1 & 50.9 / 7.2 \\
Gumbel-ST & \textbf{52.1 / 46.3} & \textbf{51.0 / 47.3} & \textbf{18.9 / 39.5} \\
\midrule
Hard-ST+\scriptsize{CAGE} & 98.4 / \underline{0.1} & 98.1 / \underline{0.0} & 57.6 / \underline{0.5} \\
Gumbel-ST+\scriptsize{CAGE} & 98.4 / \underline{0.1} & 98.2 / \underline{0.1} & 58.0 / \underline{0.6} \\
\bottomrule
\end{tabular} \vspace{-1.0em}
\end{table}

\subsection{Gumbel-ST Failure at Low Temperature}
\label{sec:gumbel_failure}

Figure~\ref{fig:main_results}(b) reveals a critical failure mode distinct from the selection gap: Gumbel-ST accuracy collapses from $\sim$98\% ($\tau \geq 0.5$) to $\sim$50\% ($\tau=0.05$), reducing performance to near-random guessing on MNIST. Soft-Gumbel also suffers, dropping to $\sim$85\% at $\tau=0.05$, though less severely. In contrast, Hard-ST and all CAGE methods remain stable at 97--98.5\% across the entire temperature range.

This failure occurs because low $\tau$ concentrates the surrogate gradient on the sampled gate. When Gumbel noise selects a suboptimal gate early in training, the concentrated gradient reinforces that poor choice, creating a self-reinforcing loop that prevents recovery. Hard-ST avoids this via deterministic argmax selection: the same gate receives gradient updates in every forward pass, so learning signal consistently improves the best available option rather than reinforcing random early choices.

Table~\ref{tab:robustness} confirms this pattern across all three datasets. Gumbel-ST exhibits catastrophic accuracy variation (39--47\% range across temperatures), while CAGE variants remain stable (0.0--0.6\% range). On CIFAR-10-Binary, Gumbel-ST drops to just 18.9\%. CAGE prevents this failure by maintaining high temperature until gates commit.

\begin{figure*}[t]
    \centering
    \begin{subfigure}[t]{0.24\textwidth}
        \centering
        \includegraphics[width=\textwidth]{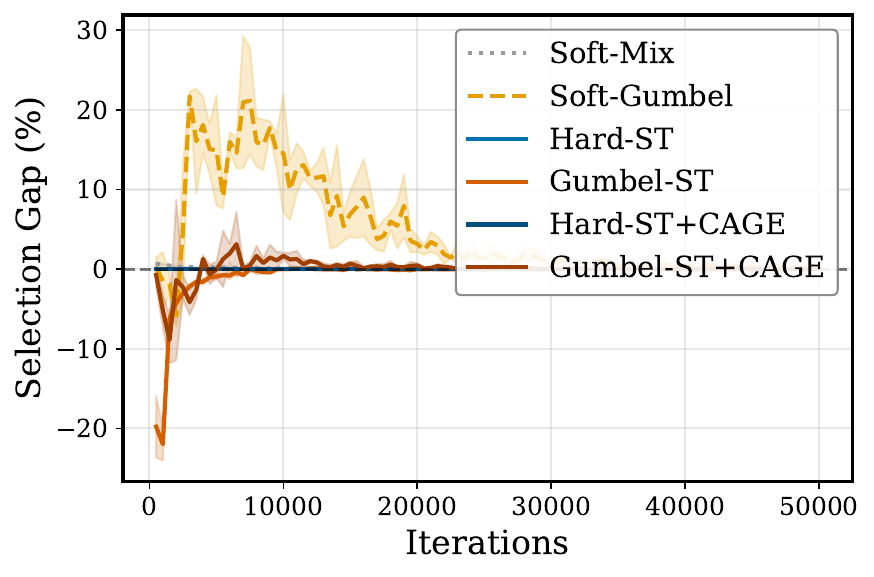}
        \caption{Selection gap}
        \label{fig:selection_gap}
    \end{subfigure}%
    \hfill
    \begin{subfigure}[t]{0.24\textwidth}
        \centering
        \includegraphics[width=\textwidth]{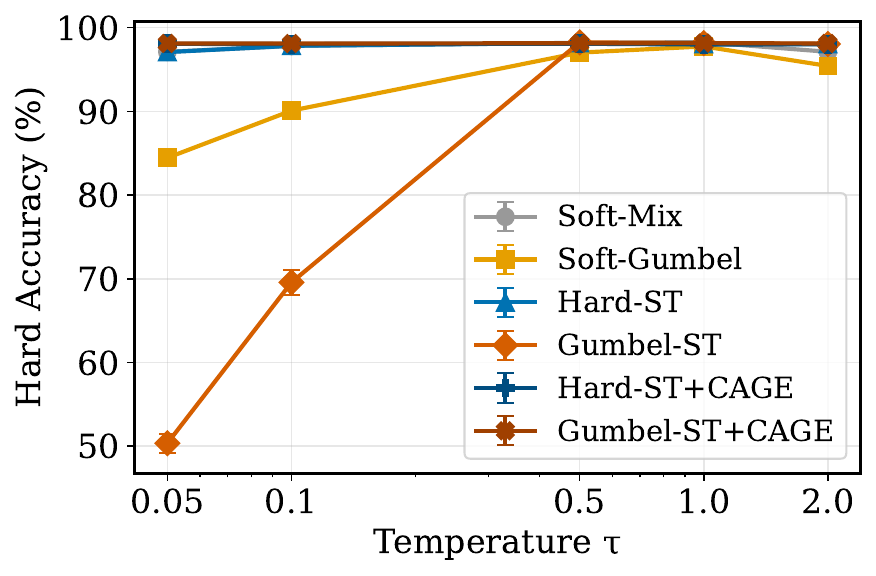}
        \caption{Accuracy vs $\tau$}
        \label{fig:accuracy_temp}
    \end{subfigure}%
    \hfill
    \begin{subfigure}[t]{0.24\textwidth}
        \centering
        \includegraphics[width=\textwidth]{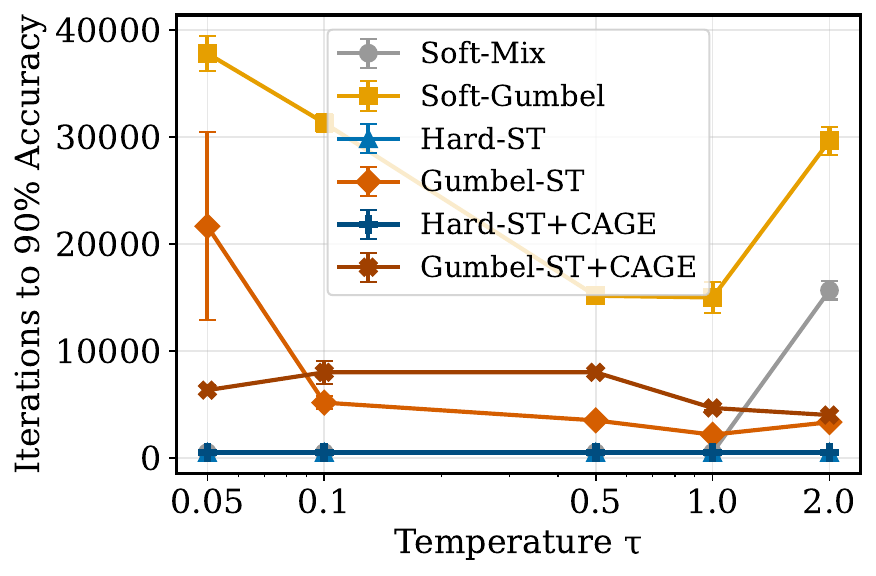}
        \caption{Convergence speed}
        \label{fig:convergence}
    \end{subfigure}%
    \hfill
    \begin{subfigure}[t]{0.24\textwidth}
        \centering
        \includegraphics[width=\textwidth]{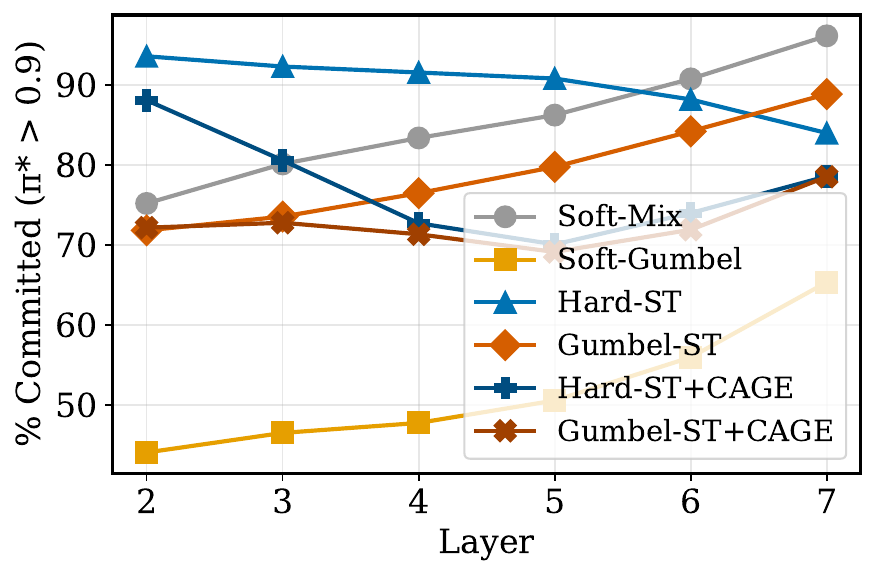}
        \caption{Gate commitment}
        \label{fig:gate_commitment}
    \end{subfigure}
    \caption{Main results (MNIST-Binary, $L$=6, $\tau$=1.0 unless varied). 
    (a) Selection gap during training: Hard-ST achieves exactly zero; soft methods show non-zero gaps.
    (b) Test accuracy vs temperature: Hard-ST and CAGE remain stable; Gumbel-ST fails at low $\tau$.
    (c) Iterations to convergence: Hard-ST converges significantly faster than soft methods.
    (d) Gate commitment by layer: Hard-ST commits; Soft-Gumbel hedges, especially in early layers.}
\label{fig:main_results}\vspace{-0.5em}
\end{figure*}

\subsection{Convergence Speed}
\label{sec:convergence}

Figure~\ref{fig:main_results}(c) reveals a marked difference in convergence speed: Hard-ST converges in approximately 500 iterations and maintains this speed across all temperatures, appearing as a flat line near the bottom of the plot. In contrast, Soft-Gumbel requires 15,000--38,000 iterations depending on temperature, making Hard-ST 30--76$\times$ faster. This directly contradicts claims that Gumbel noise accelerates training via Hessian regularization~\citep{yousefi2025mindthegap}.

The speed advantage stems from Hard-ST's gradient structure: deterministic argmax selection concentrates all learning signal on improving the current best gate, enabling rapid commitment. Gumbel methods dilute gradients across stochastically sampled gates, introducing variance that slows convergence. Notably, Hard-ST's convergence time is temperature-invariant ($\sim$500 iterations at any $\tau$), reflecting temperature decoupling. Soft-Gumbel shows a U-shaped pattern: slowest at extreme temperatures ($\sim$38,000 at $\tau=0.05$, $\sim$30,000 at $\tau=2.0$) and fastest at moderate temperatures.

\subsection{CAGE Eliminates Temperature Sensitivity}
\label{sec:cage_results}

Figure~\ref{fig:cage_adaptation} shows the dynamics of CAGE adaptation over 50,000 training iterations. In panel (a), the backward temperature $\tau_b$ starts at $\sim$2.4 (near $\tau_{\max}=3.0$) to encourage exploration, then decreases as gate confidence increases. Hard-ST+CAGE drops rapidly to $\sim$1.0 within 5,000 iterations and continues decreasing to $\sim$0.7 by convergence; Gumbel-ST+CAGE decreases more gradually, reaching $\sim$0.8. Notably, neither method reaches $\tau_{\min}=0.5$, indicating that the networks maintain some gradient smoothing throughout training rather than converging to hard gradients.

Panel (b) shows the corresponding confidence evolution. Both methods start at low confidence ($\sim$0.25) and increase as gates commit. Hard-ST+CAGE reaches $\sim$0.9 confidence by iteration 50,000, while Gumbel-ST+CAGE reaches $\sim$0.85. The faster confidence growth in Hard-ST+CAGE reflects deterministic selection's tendency toward rapid commitment.

CAGE proves highly effective at eliminating temperature sensitivity. On MNIST-Binary, CAGE reduces Gumbel-ST's accuracy range from 47.3 percentage points to just 0.1 percentage points (Table~\ref{tab:robustness}). On CIFAR-10-Binary, where Gumbel-ST collapses to 18.9\%, Gumbel-ST+CAGE maintains 58.0\% accuracy with only 0.6 percentage points variation—slightly outperforming Hard-ST+CAGE (57.6\%), suggesting that Gumbel noise provides beneficial exploration when stabilized by CAGE.

CAGE provides different benefits depending on the base method. For Hard-ST, improvement is modest (already robust due to temperature decoupling). For Gumbel-ST, CAGE is critical: it reduces accuracy variation from 39--47 percentage points to under 1 percentage point across all datasets (Table~\ref{tab:robustness}).

\begin{figure}[t]
    \centering
    \includegraphics[width=\linewidth]{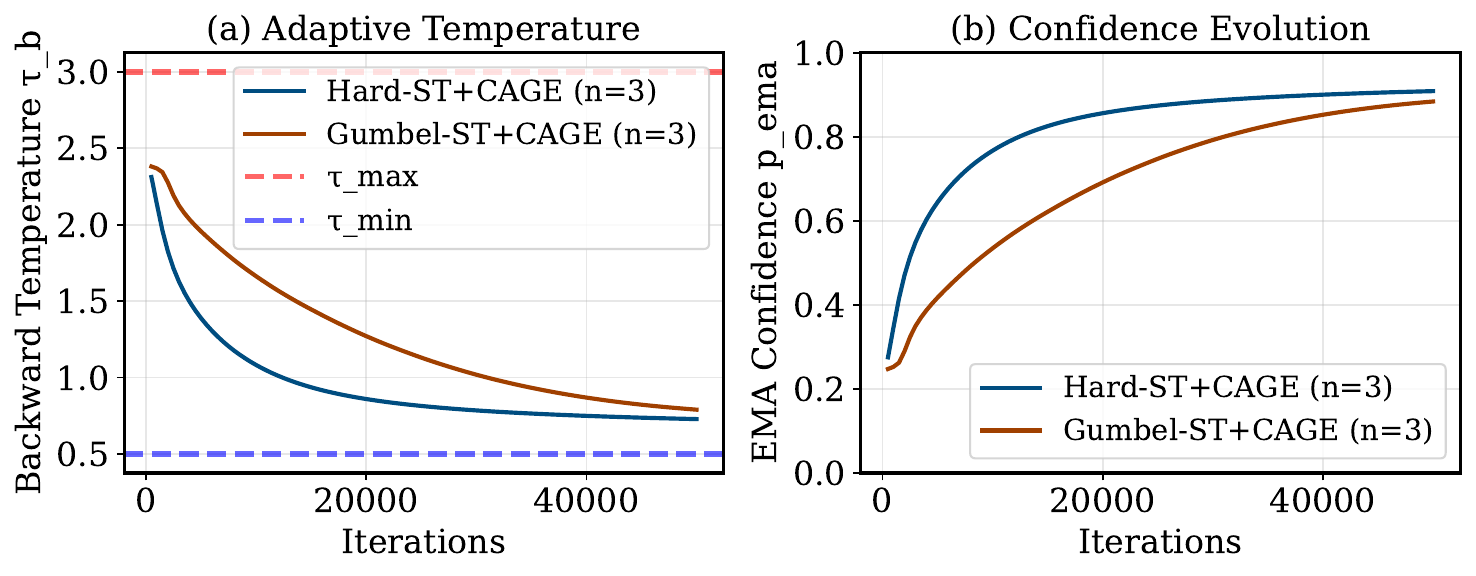} 
    \caption{CAGE adaptation dynamics (MNIST-Binary, $L{=}6$). (a) Backward temperature over training: $\tau_b$ decreases from $\tau_{\max}$ as confidence rises. (b) EMA confidence over training: increases from uniform ($\sim$0.25) toward commitment ($\sim$0.9). Results averaged over $n{=}3$ random seeds.}
\label{fig:cage_adaptation} \vspace{-0.9em}  
\end{figure}

\subsection{Gate Commitment Explains Hedging}
\label{sec:gate_commitment}

Proposition~\ref{prop:competition} predicts that mixture methods can hedge while hard selection methods must commit. Figure~\ref{fig:main_results}(d) validates this empirically: Hard-ST maintains high commitment (84--93\%) across all layers, while Soft-Gumbel shows markedly lower commitment (45--66\%). This difference directly explains the selection gap: Soft-Gumbel's hedging succeeds during training but fails at inference when argmax selects a single gate.

The layer-wise patterns reveal different commitment mechanisms. Hard-ST commits strongly from early layers, reflecting structural commitment forced by the hard forward pass. In contrast, Soft-Mix and Gumbel-ST show increasing commitment toward the output (72--96\%), driven by stronger gradient signal near the loss.

Interestingly, Soft-Mix achieves high commitment (75--96\%) despite soft forward computation. Without Gumbel noise, gradient flow concentrates on the dominant gate, and the network naturally commits. This suggests Gumbel noise, not soft forward computation alone, is the primary enabler of hedging.

Gumbel-ST shows intermediate commitment (72--90\%). Although each forward pass selects a single gate, stochastic sampling provides exploration, allowing more diversity than deterministic Hard-ST.

Hard-ST shows elevated XOR/XNOR usage ($\sim$11\% vs 6--8\% for other methods), suggesting deterministic selection encourages commitment to more expressive gates. Soft and Gumbel-based methods favor unitary gates (A, B, $\neg$A, $\neg$B), which pass specific inputs like skip connections, potentially explaining higher commitment in later layers. With CAGE, commitment decreases slightly, indicating that adaptive temperature encourages more exploration. See Appendix~\ref{app:gate_usage} for details. All findings are statistically significant ($p < 0.05$); see Appendix~\ref{app:statistical_tests}.

%% file: section9_conclusion.tex
\section{Conclusion}
\label{sec:conclusion}

We analyzed trainable discrete selection embedded in neural networks and confirmed forward alignment as the key principle for eliminating training-inference mismatch. This principle is simple but not trivial, as prior work has not clearly investigated the causes of gap reduction. Our results showed that Hard-ST achieved exactly zero discretization gap at all temperatures, while soft methods exhibited large peak gaps. Critically, Soft-Gumbel and Gumbel-ST used identical Gumbel noise but exhibited markedly different gaps: forward computation structure, not noise distribution, determined the gap. For practitioners, Hard-ST can be the default without careful tuning; Gumbel-ST with CAGE is recommended when exploration and stability are critical. While our experiments focused on logic gate networks, the principle may extend to other architectures requiring discrete selection at deployment (e.g., neural architecture search, mixture-of-experts, and vector quantization).

%% file: impact_statement.tex

\section*{Impact Statement}
This paper analyzes the training-inference gap in discrete selection networks and proposes methods to reduce it. Our contributions are primarily theoretical and methodological, focusing on understanding gradient estimation for discrete neural network components.

The techniques studied---logic gate networks, straight-through estimators, and temperature scheduling---are general-purpose tools applicable across many domains. We do not foresee specific negative societal consequences arising directly from this work. The potential applications (efficient neural networks, neural architecture search, mixture-of-experts routing) carry the same broad implications as machine learning research in general.

%% file: appendix.tex
\section{Potential Implications for Related Domains}
\label{app:implications}

Our theoretical framework may extend beyond logic gate networks. The forward alignment principle, which holds that training-time selection should match inference-time selection, predicts which methods avoid discretization gap. We briefly note potential connections; direct validation remains future work.

\paragraph{Neural Architecture Search.}
Differentiable NAS~\citep{liu2019darts} relaxes discrete architecture selection via softmax-weighted operation mixtures, enabling gradient-based optimization. However, this introduces a training-inference gap: the soft supernetwork used during search differs fundamentally from the discrete architecture deployed at test time. This gap manifests as various pathologies, including skip connection collapse where parameter-free operations dominate~\citep{chu2020fairdarts, zela2020understanding}, depth mismatch between search and evaluation~\citep{chen2019progressive}, and overfitting of architecture parameters~\citep{liang2019darts+}. Remedies include independent operation weighting~\citep{chu2020fairdarts}, perturbation-based regularization~\citep{zela2020understanding}, and Gumbel-softmax sampling~\citep{xie2019snas, dong2019gdas}. Our work addresses the analogous gap in logic gate networks, where the mismatch between real-valued relaxations and discrete Boolean operations causes similar performance degradation.

\paragraph{Mixture of Experts.}
Traditional MoE~\citep{jacobs1991adaptive,jordan1994hierarchical} uses soft gating, but modern large-scale models have largely converged on hard routing. Switch Transformer~\citep{fedus2021switch} uses hard top-1 (sparse) routing during both training and inference, avoiding soft-to-hard mismatch entirely. GShard~\citep{lepikhin2021gshard} and Expert Choice~\citep{zhou2022mixture} similarly employ hard routing with auxiliary losses for load balancing. Soft MoE~\citep{puigcerver2024soft} uses soft routing in both phases to avoid mismatch while accepting the cost of activating all experts. Recent work enables transitions between dense and sparse architectures~\citep{komatsuzaki2023sparse, zhang2022moefication}, converting pretrained dense models into MoE without retraining from scratch. Our framework provides one perspective on why hard routing dominates, though MoE faces distinct challenges (load balancing, expert utilization) orthogonal to selection gap.

\paragraph{Vector Quantization.}
VQ-VAE~\citep{oord2017neural, razavi2019generating} learns discrete representations via hard codebook selection (nearest neighbor lookup) with straight-through gradient estimation. The design was motivated by learning discrete latent structure and avoiding posterior collapse in models with powerful decoders. From our analysis perspective, this architecture naturally exhibits zero discretization gap: the forward pass uses identical hard selection during both training and inference. The commitment loss encourages encoder outputs to stay close to their assigned codebook entries, which can be interpreted as reducing the entropy of the implicit selection distribution—analogous to encouraging gate probabilities toward deterministic values in our framework.

\section{Training-Inference Gap Across Domains}
\label{app:domains}

Table~\ref{tab:pattern} summarizes the training-inference gap pattern across domains using discrete selection.

\begin{table}[h]
\centering
\small
\caption{Training-inference gap in discrete selection. Domains using soft training with hard inference suffer from mismatch; those using consistent selection avoid it.}
\label{tab:pattern}
\begin{tabular}{lllll}
\toprule
\textbf{Domain} & \textbf{Discrete Choices} & \textbf{Training} & \textbf{Inference} & \textbf{Gap?} \\
\midrule
\textbf{Logic Gate Networks}~\citep{petersen2022deep, kim2024narrowing} & Boolean gates & Soft mixture & Hard (argmax) & Yes \\
NAS (DARTS)~\citep{liu2019darts} & Operations & Soft mixture & Hard (argmax) & Yes \\
NAS (SNAS/GDAS)~\citep{xie2019snas, dong2019gdas} & Operations & Hard (sample) & Hard (argmax) & Small \\
MoE (traditional)~\citep{jacobs1991adaptive} & Expert networks & Soft gating & Soft gating & No \\
MoE (sparse)~\citep{fedus2021switch} & Expert networks & Hard (top-k) & Hard (top-k) & No \\
VQ-VAE~\citep{oord2017neural} & Codebook vectors & Hard (nearest) & Hard (nearest) & No \\
\bottomrule
\end{tabular}
\end{table}

%% file: appendix_proofs.tex

\section{Proofs of Theoretical Results}
\label{app:proofs}
This appendix provides complete proofs for all theoretical results stated in Section~\ref{sec:theory}. We use notation consistent with the main text: $h^M$ denotes method $M$'s forward output, $g_i$ denotes gate $i$'s output, and $\Delta h_{\text{sel}}$ denotes the selection gap.

\subsection{Proposition~\ref{prop:gumbel_max} (Gumbel-Max Theorem)}
\label{proof:gumbel_max}

This is a classical result~\citep{gumbel1954statistical}. We include a sketch for completeness.

\begin{proof}[Proof Sketch]
Let $G_i \stackrel{\mathrm{iid}}{\sim} \mathrm{Gumbel}(0,1)$ with CDF $F(x) = e^{-e^{-x}}$ and PDF $f(x) = e^{-x} e^{-e^{-x}}$. For option $j$ to win, we need $G_i < z_j - z_i + G_j$ for all $i \neq j$.

Conditioning on $G_j = g$ and applying the CDF:
$\Prob(G_i < z_j - z_i + g) = e^{-e^{-(z_j - z_i + g)}} = e^{-e^{-g} e^{z_i - z_j}}$.

By independence, the probability that $j$ wins is the product over all $i \neq j$:
\begin{equation}
    \Prob\bigl(\arg\max_i (z_i + G_i) = j \mid G_j = g\bigr) = \prod_{i \neq j} e^{-e^{-g} e^{z_i - z_j}} = \exp\Bigl(-e^{-g} \sum_{i \neq j} e^{z_i - z_j}\Bigr).
\end{equation}

Marginalizing over $G_j$ with PDF $f(g)$ and noting $1 + \sum_{i \neq j} e^{z_i - z_j} = \sum_i e^{z_i - z_j}$:
\begin{equation}
    \Prob\bigl(\arg\max_i (z_i + G_i) = j\bigr) = \int_{-\infty}^{\infty} e^{-g} \exp\Bigl(-e^{-g} \sum_{i} e^{z_i - z_j}\Bigr) dg.
\end{equation}

Substituting $u = e^{-g}$ gives
$\int_{0}^{\infty} e^{-u \sum_i e^{z_i - z_j}} du = \frac{1}{\sum_i e^{z_i - z_j}} = \frac{e^{z_j}}{\sum_i e^{z_i}} = \mathrm{softmax}(\bz)_j$. 

Scale invariance: $\arg\max_i v_i = \arg\max_i (v_i / c)$ for any $c > 0$.
\end{proof}

\subsection{Proof of Corollary~\ref{cor:gumbel_st_gap} (Gumbel-ST: Selection Gap)}
\label{proof:gumbel_st_gap}

\begin{proof}
Let $k = \arg\max_i(z_i + G_i)$ denote the stochastically selected gate. The training output is $h^M = g_k$.

By Proposition~\ref{prop:gumbel_max}, $\Prob(k = i) = p_i = \mathrm{softmax}(\bz)_i$. Taking expectations:
\begin{equation}
    \E[h^M] = \sum_{i=0}^{K-1} p_i g_i.
\end{equation}

The selection gap compares to $g_{i^*}$ where $i^* = \arg\max_i z_i$:
\begin{equation}
    \E[\Delta h_{\text{sel}}] = \sum_{i=0}^{K-1} p_i g_i - g_{i^*} = \sum_{i \neq i^*} p_i(g_i - g_{i^*}).
\end{equation}

Since $p_i = \mathrm{softmax}(\bz)_i$ is determined by raw logits (no temperature 
in the Gumbel-Max forward pass), the expected gap is independent of backward 
temperature $\tau_b$.
\end{proof}

\subsection{Proof of Proposition~\ref{prop:hard_st_gap} (Hard-ST: Zero Selection Gap)}
\label{proof:hard_st_gap}
\begin{proof}
Hard-ST uses $i^* = \arg\max_i z_i$ for both training and inference. 
Thus $h^M = g_{i^*} = h^{\text{hard}}$, yielding $\Delta h_{\text{sel}} = 0$ exactly.
\end{proof}

\subsection{Proof of Proposition~\ref{prop:soft_gap} (Mixture Methods: Selection Gap)}
\label{proof:soft_gap}

\begin{proof}
\textbf{Soft-Mix.} The training output is $h^M = \sum_i \pi_i(\tau) g_i$ where $\pi_i(\tau) = \mathrm{softmax}(\bz/\tau)_i$. The selection gap:
\begin{equation}
    \Delta h_{\text{sel}} = \sum_{i=0}^{K-1} \pi_i(\tau) g_i - g_{i^*} = \sum_{i \neq i^*} \pi_i(\tau) (g_i - g_{i^*}).
\end{equation}
Since $\pi_i(\tau)$ depends explicitly on $\tau$, the selection gap is temperature-dependent.

\textbf{Soft-Gumbel.} The training output is $h^M = \sum_i \tilde{\pi}_i(\tau) g_i$ where $\tilde{\pi}_i(\tau) = \mathrm{softmax}((\bz+\bG)/\tau)_i$. The selection gap has the same form:
\begin{equation}
    \Delta h_{\text{sel}} = \sum_{i \neq i^*} \tilde{\pi}_i(\tau) (g_i - g_{i^*}).
\end{equation}
Gumbel noise perturbs the weights but preserves the mixture structure, so the gap remains temperature-dependent.
\end{proof}

\subsection{Proof of Theorem~\ref{thm:gap_bound} (Unified Selection Gap Bound)}
\label{proof:gap_bound}

\begin{proof}
Starting from the selection gap with effective weights $\bw$:
\begin{equation}
    |\Delta h_{\text{sel}}| = \left|\sum_{i \neq i^*} w_i (g_i - g_{i^*})\right|.
\end{equation}

Applying the triangle inequality and bounding $|g_i - g_{i^*}| \leq \Delta_{\max}$:
\begin{equation}
    |\Delta h_{\text{sel}}| \leq \sum_{i \neq i^*} w_i |g_i - g_{i^*}| \leq \sum_{i \neq i^*} w_i \cdot \Delta_{\max} = (1 - w_{i^*}) \cdot \Delta_{\max}.
\end{equation}

The bound is tight when all non-winner weight concentrates on the gate with maximum deviation.
\end{proof}

\subsection{Proof of Proposition~\ref{prop:holder} (Loss Gap Bound)}
\label{proof:holder}

\begin{proof}
For GroupSum architecture, the class $c$ output is $y_c = \sum_{n \in G_c} h_n$. By linearity, the logit gap decomposes as $\Delta y_c = \sum_{n \in G_c} \Delta h_n$.

By first-order Taylor expansion around $\by^{\text{hard}}$:
\begin{equation}
    \mathcal{L}(\by^M) \approx \mathcal{L}(\by^{\text{hard}}) + \nabla_{\by} \mathcal{L}\big|_{\by^{\text{hard}}}^\top \bDelta + O(\|\bDelta\|^2),
\end{equation}
where $\bDelta = \by^M - \by^{\text{hard}}$.

Applying H\"{o}lder's inequality with $(p,q) = (1, \infty)$:
\begin{equation}
    |\nabla_{\by} \mathcal{L}^\top \bDelta| \leq \|\nabla_{\by} \mathcal{L}\|_1 \cdot \|\bDelta\|_\infty.
\end{equation}

For cross-entropy loss, let $\sigma_c = \mathrm{softmax}(\by^{\text{hard}})_c$ denote the predicted probability for class $c$ at inference. The gradient is:
\begin{equation}
    \frac{\partial \mathcal{L}}{\partial y_c} = \sigma_c - \mathbf{1}[c = y].
\end{equation}

The $L_1$ norm:
\begin{equation}
    \|\nabla_{\by} \mathcal{L}\|_1 = |1 - \sigma_y| + \sum_{c \neq y} \sigma_c = (1 - \sigma_y) + (1 - \sigma_y) = 2(1 - \sigma_y).
\end{equation}

For the logit gap, using the node-level bound (Theorem~\ref{thm:gap_bound}) with $\Delta_{\max} = 1$:
\begin{equation}
    \|\bDelta\|_\infty = \max_c |\Delta y_c| \leq \max_c \sum_{n \in G_c} |\Delta h_n| \leq \max_c \sum_{n \in G_c} (1 - w_{n,i^*_n}).
\end{equation}

Combining:
\begin{equation}
    |\Delta \mathcal{L}| \leq 2(1 - \sigma_y) \max_c \sum_{n \in G_c} (1 - w_{n,i^*_n}) + O(\|\bDelta\|^2).
\end{equation}
This bound shows that loss gap is controlled by the product of prediction uncertainty $(1 - \sigma_y)$ and total non-commitment $\sum_n (1 - w_{n,i^*_n})$. When nodes fully commit, the bound vanishes. For Hard-ST, selection gap is exactly zero at every node, so the bound is trivially satisfied.

\end{proof}

\subsection{Proof of Proposition~\ref{prop:unified_gradient} (Gradient Structure)}
\label{proof:unified_gradient}

\begin{proof}
For any method using surrogate weights $w_j(\tau)$, the gradient with respect to logit $z_j$ is:
\begin{equation}
    \frac{\partial \mathcal{L}}{\partial z_j} = \frac{\partial \mathcal{L}}{\partial h} \cdot \frac{\partial h}{\partial z_j}.
\end{equation}

The surrogate gradient (straight-through for hard methods, direct for soft methods) uses:
\begin{equation}
    \frac{\partial h}{\partial z_j} \approx \frac{\partial}{\partial z_j} \sum_i w_i(\tau) g_i = \sum_i g_i \frac{\partial w_i}{\partial z_j}.
\end{equation}

For softmax weights: $\frac{\partial w_i}{\partial z_j} = \frac{1}{\tau} w_i (\mathbf{1}[i=j] - w_j)$.

Substituting and simplifying:
\begin{equation}
    \frac{\partial \mathcal{L}}{\partial z_j} = \frac{\delta}{\tau} \cdot w_j(\tau) \cdot (g_j - \bar{h}),
\end{equation}
where $\delta = \partial \mathcal{L}/\partial h$ and $\bar{h} = \sum_i w_i g_i$.
\end{proof}

\paragraph{Where $\delta$ is evaluated.}
The gradient formula is identical across methods, but $\delta$ is evaluated at different points.
\textbf{Mixture methods:} $\delta = \partial\mathcal{L}/\partial h|_{h=\bar{h}}$; the gradient exactly optimizes $\mathcal{L}(\bar{h}, y)$.
\textbf{Hard-ST:} $\delta = \partial\mathcal{L}/\partial h|_{h=g_{i^*}}$; forward uses discrete output, backward uses soft surrogate.
\textbf{Gumbel-ST:} $\delta = \partial\mathcal{L}/\partial h|_{h=g_k}$ for sampled $k$; in expectation, approximates $\nabla_{\bz} \sum_i p_i \mathcal{L}(g_i, y)$.

\paragraph{Competition term.}
The term $(g_j - \bar{h})$ induces competition: gates with $g_j > \bar{h}$ (outperforming the mixture) receive positive gradient when $\delta < 0$, increasing their selection weight. This drives commitment over training.

\subsection{Analysis of Proposition~\ref{prop:competition} (Competitive Gate Selection)}
\label{proof:competition}

We analyze how the competition term $(g_j - \bar{h})$ leads to different behaviors across methods.

\paragraph{Output spaces.}
The achievable output space differs by method.
\textbf{Mixture methods:} $\mathcal{H}_{\mathrm{soft}} = \{\sum_i w_i g_i : \mathbf{w} \in \Delta^{K-1}\} = [0,1]$, the convex hull of gate outputs.
\textbf{Hard methods:} $\mathcal{H}_{\mathrm{hard}} = \{g_0, \ldots, g_{K-1}\} \subseteq \{0,1\}$, the discrete set of gate outputs.

\paragraph{Mixture methods can hedge.}
Since $\mathcal{H}_{\mathrm{soft}} = [0,1]$ is continuous, multiple weight configurations can achieve similar blended outputs. The optimizer can find favorable combinations where $\bar{h} = \sum_i w_i g_i$ reduces loss, even if no single gate $g_i$ performs well individually. This is hedging: relying on the blend rather than committing to one gate.

\paragraph{Hard methods must commit.}
Since $\mathcal{H}_{\mathrm{hard}} \subseteq \{0,1\}$ contains only discrete outputs, the optimizer must find a single gate that performs well. For Hard-ST, this requires $z_k > z_j$ for all $j \neq k$. For Gumbel-ST, reliable selection requires $p_k \approx 1$, meaning $z_k \gg z_j$. Both conditions demand strong logit separation, forcing commitment.

\paragraph{Connection to selection gap.}
Hedging solutions have $\bar{h} \notin \{0,1\}$, so $\bar{h} \neq g_{i^*}$ generically, producing selection gap. Commitment solutions have concentrated weights, so $\bar{h} \approx g_{i^*}$, eliminating gap.

\subsection{Proof of Proposition~\ref{prop:convergence} (Convergence Scaling)}
\label{proof:convergence}

\begin{proof}
Consider a single selection node choosing among $K$ options with logits 
$\mathbf{z} = (z_1, \ldots, z_K)$, learning rate $\eta$, and backward 
temperature $\tau_b$.

\textbf{Step 1: Softmax gradient scaling.}
The softmax probabilities with temperature are:
\begin{equation}
    p_i = \frac{e^{z_i/\tau_b}}{\sum_j e^{z_j/\tau_b}}.
\end{equation}
The gradient of $p_i$ with respect to $z_i$ is:
\begin{equation}
    \frac{\partial p_i}{\partial z_i} = \frac{1}{\tau_b} p_i(1 - p_i) = O(1/\tau_b).
\end{equation}
Thus, gradient magnitude scales inversely with $\tau_b$.

\textbf{Step 2: Required logit gap for target confidence.}
To achieve confidence $p_{i^*} \geq 1 - \epsilon$ for the dominant option $i^*$, 
we require:
\begin{equation}
    \frac{e^{z_{i^*}}}{\sum_j e^{z_j}} \geq 1 - \epsilon.
\end{equation}
Assuming other logits are approximately equal to some baseline $z_0$, let 
$\Delta = z_{i^*} - z_0$. Then:
\begin{align}
    \frac{e^{\Delta}}{e^{\Delta} + (K-1)} &\geq 1 - \epsilon \\
    e^{\Delta} &\geq (1-\epsilon)(e^{\Delta} + (K-1)) \\
    \epsilon \cdot e^{\Delta} &\geq (1-\epsilon)(K-1) \\
    \Delta &\geq \log\left(\frac{(1-\epsilon)(K-1)}{\epsilon}\right).
\end{align}
For small $\epsilon$ and fixed $K$:
\begin{equation}
    \Delta = O(\log(1/\epsilon)).
\end{equation}

\textbf{Step 3: Per-step logit update.}
With learning rate $\eta$, gradient magnitude $O(1/\tau_b)$, and approximately 
constant upstream gradient $g$, the per-step update to the dominant logit is:
\begin{equation}
    \Delta z_{i^*} = \eta \cdot \frac{g}{\tau_b} \cdot p_{i^*}(1-p_{i^*}) = O(\eta/\tau_b).
\end{equation}
Similarly, non-dominant logits decrease, so the gap $\Delta$ grows at rate 
$O(\eta/\tau_b)$ per step.

\textbf{Step 4: Convergence time.}
The number of steps to achieve the required gap is:
\begin{equation}
    T(\tau_b) = \frac{\text{Required gap}}{\text{Gap growth per step}} 
    = \frac{O(\log(1/\epsilon))}{O(\eta/\tau_b)} 
    = O\left(\frac{\tau_b \cdot \log(1/\epsilon)}{\eta}\right).
\end{equation}
\end{proof}

\section{CAGE Algorithm Details}
\label{app:cage_algorithm}
A pseudocode of the CAGE is shown in
Algorithm~\ref{alg:cage}.
\begin{algorithm}[htbp]
\caption{CAGE: Confidence-Adaptive Gradient Estimation}
\label{alg:cage}
\begin{algorithmic}
\REQUIRE $\tau_{\min}=0.5$, $\tau_{\max}=3.0$, $\beta=0.99$
\STATE Initialize $c_{\text{ema}} \leftarrow 1/K$
\FOR{each training step $t$}
    \STATE $c(t) \leftarrow \frac{1}{N}\sum_{n=1}^N \max_i \text{softmax}(\mathbf{z}_n)_i$
    \STATE $c_{\text{ema}} \leftarrow \beta \cdot c_{\text{ema}} + (1-\beta) \cdot c(t)$
    \STATE $\tau_b \leftarrow \tau_{\max} - (\tau_{\max} - \tau_{\min}) \cdot \frac{c_{\text{ema}} - 1/K}{1 - 1/K}$
    \STATE Forward: hard selection (Hard-ST or Gumbel-ST)
    \STATE Backward: ST gradients with temperature $\tau_b$
\ENDFOR
\end{algorithmic}
\end{algorithm}

\section{Critique of Hessian Regularization Hypothesis}
\label{app:hessian_critique}

Recent work by~\citet{yousefi2025mindthegap} analyzed the discretization gap through the lens of ``implicit Hessian regularization.'' Their analysis assumes the loss $\mathcal{L}$ is computed directly on softmax outputs, yielding an expansion:
\begin{align}
    J(\bz) &= \E_{\mathbf{G}}[\mathcal{L}(\mathrm{softmax}((\bz+\mathbf{G})/\tau))] \nonumber \\
    &= \mathcal{L}(\mathrm{softmax}(\bz/\tau)) + \frac{\pi^2}{12\tau^2}\mathrm{tr}(H_{\mathcal{L}}) + O(\tau^{-3}),
\end{align}
where the Hessian trace term provides implicit regularization. The hypothesis is that Gumbel noise reduces mismatch through this regularization effect.

We identify three issues:

\textbf{Mixture vs.\ Hard Forward.} The analysis assumes mixture forward. For Gumbel-ST, the forward output is discrete ($h^M = g_k$), and the Hessian expansion does not directly apply.

\textbf{Architecture Mismatch.} In deep networks, selection logits propagate through multiple layers before reaching the loss. The relationship between selection temperature and loss Hessian is more complex than the single-layer analysis suggests.

\textbf{Empirical Evidence.} Soft-Gumbel and Gumbel-ST use identical Gumbel noise but exhibit markedly different gaps. This suggests gap reduction stems from forward alignment, not Hessian regularization.

\section{Extended Gradient Analysis}
\label{app:gradients}

\paragraph{Gumbel Methods Maintain Gradient Flow.}
Even at small $\tau_b$, Gumbel noise can shift the winner: $k = \arg\max_i(z_i + G_i)$ differs from $i^* = \arg\max_i z_i$ with probability $1 - p_{i^*} > 0$.

When $k \neq i^*$, the output $h^M = g_k \neq g_{i^*}$ provides non-zero gradient signal. Over multiple training steps, different gates win, maintaining gradient flow to all gates. This exploration mechanism is distinct from the soft surrogate used in the backward pass.

\paragraph{Low-Temperature Failure Mode.}
For mixture methods at low $\tau$, gradients vanish for two reasons: (1) the winner's deviation $(g_{i^*} - \bar{h}) \to 0$ as $\bar{h} \to g_{i^*}$, and (2) non-winner weights $w_j \to 0$. Hard selection methods avoid this because $\delta$ is evaluated at the discrete output, not the collapsing mixture.
\section{GroupSum Temperature Derivation}
\label{app:groupsum_tau}

We derive the temperature $\tau$ for GroupSum layers~\citep{petersen2022deep, kim2023deep} that produces well-conditioned softmax outputs at initialization.

Consider classification with $C$ classes, $k$ neurons per class, each output $h_n \in [0,1]$, GroupSum readout $s_c = \sum_{n \in \mathcal{G}_c} h_n$, and scaled logits $s'_c = s_c / \tau$.

\textbf{Statistics at Initialization.}
With random gate selection, $h_n \sim \text{Bernoulli}(0.5)$, giving $\E[s_c] = 0.5k$ and $\text{Var}[s_c] = 0.25k$. Since softmax is translation-invariant, only logit differences matter. Let $\Delta s = s_i - s_j$ denote the difference between two class sums. Since $s_i$ and $s_j$ are independent:
\begin{equation}
    \text{Var}[\Delta s] = \text{Var}[s_i] + \text{Var}[s_j] = 0.5k, \quad \text{Std}[\Delta s] = \sqrt{0.5}\sqrt{k} \approx 0.71\sqrt{k}.
\end{equation}

\textbf{Optimal Temperature.}
We target $\sigma_{\max} \approx 0.66$ (a design choice balancing gradient flow: not too peaked, not too uniform). Solving $\sigma_{\max} = \exp(\Delta s / \tau) / [\exp(\Delta s / \tau) + (C-1)] = 0.66$ gives $\Delta s / \tau \approx \ln(C-1) + 0.7$. Using two standard deviations as typical maximum difference ($\Delta s \approx 1.42\sqrt{k}$):
\begin{equation}
    \tau = \frac{1.42\sqrt{k}}{\ln(C-1) + 0.7} = \alpha(C)\sqrt{k},
\end{equation}
where $\alpha(C) = 1.42 / (\ln(C-1) + 0.7)$. For common class counts: $\alpha(10) = 0.49$, $\alpha(100) = 0.27$, $\alpha(1000) = 0.19$.
 

%% file: appendix_experiments.tex

\section{Experimental Details}
\label{app:detailed_results}

\subsection{Hyperparameters}
\label{app:hyperparams}
Table~\ref{tab:hyperparams_app} summarizes the training configurations used in our experiments.

\begin{table}[h]
\centering
\caption{Training configuration.}
\label{tab:hyperparams_app}
\small
\begin{tabular}{lll}
\toprule
\textbf{Parameter} & \textbf{MNIST} & \textbf{CIFAR-10} \\
\midrule
Neurons per layer & 64,000 & 128,000 \\
Layers & 4, 5, 6 & 4, 5, 6 \\
Gates per node & 16 & 16 \\
Batch size & 512 & 512 \\
Learning rate & 0.01 & 0.05 \\
Optimizer & Adam & Adam \\
Iterations & 50,000 & 60,000 \\
Seeds & 3 & 3 \\
\midrule
\multicolumn{3}{l}{\textit{Temperature sweep}: $\tau \in \{0.05, 0.1, 0.5, 1.0, 2.0\}$} \\
\multicolumn{3}{l}{\textit{CAGE}: $\tau_{\max}=3.0$, $\tau_{\min}=0.5$, $\beta=0.99$} \\
\bottomrule
\end{tabular}
\end{table}

\subsection{Cross-Dataset Summary}
\label{app:cross_dataset_summary}

Table~\ref{tab:cross_dataset_summary} provides a consolidated view of key metrics across all datasets at $L$=6, the primary configuration reported in the main text.

Hard-ST achieves zero selection gap on MNIST-Binary and CIFAR-10-Binary (binary inputs eliminate computation gap). On MNIST, the small 0.4\% gap is purely computation gap from continuous inputs.
Gumbel-ST fails catastrophically at low $\tau$ across all datasets, with accuracy variation of 39--47 percentage points.
CAGE eliminates temperature sensitivity, reducing accuracy variation to $<$1 percentage point for both Hard-ST and Gumbel-ST.

\begin{table*}[htb]
\centering
\caption{Cross-dataset summary ($L$=6). Peak selection gap (\%) during training and accuracy range (\%) across $\tau \in \{0.05, 0.1, 0.5, 1.0, 2.0\}$. Hard-ST achieves zero selection gap and minimal accuracy range across all datasets. Gumbel-ST suffers catastrophic training failure at low $\tau$. \textbf{Bold}: zero selection gap; \underline{underline}: best accuracy range.}
\label{tab:cross_dataset_summary}
\small
\begin{tabular}{lcccccc}
\toprule
& \multicolumn{2}{c}{\textbf{MNIST}} & \multicolumn{2}{c}{\textbf{MNIST-Binary}} & \multicolumn{2}{c}{\textbf{CIFAR-10-Binary}} \\
\cmidrule(lr){2-3} \cmidrule(lr){4-5} \cmidrule(lr){6-7}
\textbf{Method} & Peak Gap & Acc Range & Peak Gap & Acc Range & Peak Gap & Acc Range \\
\midrule
Soft-Mix & 23\% & 1.4\% & 24\% & 1.1\% & 9\% & 7.5\% \\
Soft-Gumbel & 40\% & 12.0\% & 40\% & 12.2\% & 3.5\% & 5.4\% \\
\midrule
Hard-ST & 0.4\%$^*$ & 1.3\% & \textbf{0\%} & 1.1\% & \textbf{0\%} & 7.2\% \\
Gumbel-ST & 62\%$^\dagger$ & 46.3\% & 61\%$^\dagger$ & 47.3\% & 16\%$^\dagger$ & 39.5\% \\
\midrule
Hard-ST+CAGE & 0.3\%$^*$ & \underline{0.1\%} & \textbf{0\%} & \underline{0.0\%} & \textbf{0\%} & \underline{0.5\%} \\
Gumbel-ST+CAGE & 9\% & \underline{0.1\%} & 9\% & \underline{0.1\%} & 0.7\% & \underline{0.6\%} \\
\bottomrule
\end{tabular}
\\[1mm]
\scriptsize{$^*$Includes computation gap from continuous inputs (selection gap is zero). $^\dagger$Training failure at low $\tau$ produces large negative gap during training.}
\end{table*}

\subsection{Gap Dynamics}
\label{app:gap_dynamics}

\paragraph{MNIST.}
Table~\ref{tab:gap_dynamics_mnist} shows selection gap dynamics during training on MNIST. The format ``Final [min, max]'' indicates the final gap at convergence and the range observed during the last 80\% of training (excluding initial warm-up).

MNIST has continuous pixel values in $[0,1]$, so the measured gap includes both selection and computation components. Hard-ST shows $\sim$0.2\% gap from computation gap alone (selection gap is zero by construction). Gumbel-ST exhibits catastrophic negative gaps at $\tau \leq 0.1$, indicating training failure where deployment accuracy exceeds training accuracy.

\begin{table}[h]
\centering
\caption{Gap trajectory on MNIST. Format: Final [min, max] (\%). Range computed over last 80\% of training. Hard-ST shows stable near-zero gap; Soft-Gumbel shows large transient peaks that increase with $\tau$.}
\label{tab:gap_dynamics_mnist}
\scriptsize
\begin{tabular}{llccccc}
\toprule
\textbf{Method} & \textbf{L} & $\tau$=0.05 & $\tau$=0.1 & $\tau$=0.5 & $\tau$=1.0 & $\tau$=2.0 \\
\midrule
Soft-Mix & 4 & $+0.3$ [+0.2,+0.4] & $+0.2$ [+0.1,+0.3] & $+0.2$ [+0.1,+0.3] & $+0.1$ [+0.0,+0.3] & $+0.2$ [+0.2,+0.5] \\
 & 5 & $+0.3$ [+0.2,+0.4] & $+0.2$ [+0.1,+0.3] & $+0.2$ [+0.1,+0.3] & $+0.1$ [+0.1,+0.3] & $+0.3$ [+0.2,+1.5] \\
 & 6 & $+0.3$ [+0.1,+0.4] & $+0.2$ [+0.1,+0.3] & $+0.2$ [+0.1,+0.3] & $+0.1$ [+0.1,+0.3] & $+0.7$ [+0,+23] \\
\midrule
Soft-Gumbel & 4 & $+0.6$ [-31,+2] & $+0.2$ [-15,+2] & $+0.3$ [-0.1,+2.3] & $+0.3$ [+0,+10] & $+0.5$ [+0,+21] \\
 & 5 & $-0.2$ [-33,+1] & $+0.2$ [-33,+2] & $+0.3$ [+0.0,+2.9] & $+0.4$ [+0,+16] & $+1.1$ [+1,+33] \\
 & 6 & $+0.2$ [-29,+4] & $+0.6$ [-25,+4] & $+0.4$ [-0,+9] & $+0.4$ [+0,+16] & $+1.7$ [+1,+40] \\
\midrule
Hard-ST & 4 & $+0.2$ [+0.2,+0.4] & $+0.2$ [+0.2,+0.3] & $+0.3$ [+0.2,+0.4] & $+0.2$ [+0.1,+0.4] & $+0.2$ [+0.1,+0.4] \\
 & 5 & $+0.3$ [+0.2,+0.4] & $+0.2$ [+0.1,+0.3] & $+0.2$ [+0.1,+0.3] & $+0.2$ [+0.1,+0.3] & $+0.2$ [+0.1,+0.3] \\
 & 6 & $+0.2$ [+0.2,+0.4] & $+0.2$ [+0.1,+0.3] & $+0.2$ [+0.1,+0.2] & $+0.2$ [+0.1,+0.4] & $+0.1$ [+0.0,+0.3] \\
\midrule
Gumbel-ST & 4 & $-46.3$ [-48,-41] & $-43.2$ [-56,-42] & $+0.2$ [-0.2,+0.4] & $+0.1$ [+0.0,+0.6] & $+0.1$ [+0.0,+0.8] \\
 & 5 & $-41.8$ [-43,-28] & $-55.2$ [-59.0,-54.5] & $+0.1$ [-0.4,+0.2] & $+0.2$ [-0.1,+0.4] & $+0.2$ [+0.0,+0.9] \\
 & 6 & $-41.8$ [-43,-31] & $-58.2$ [-62,-55] & $+0.2$ [-0.8,+0.3] & $+0.2$ [-0.1,+0.4] & $+0.2$ [+0.0,+1.4] \\
\midrule
Hard-ST+CAGE & 4 & $+0.1$ [+0.0,+0.3] & $+0.2$ [+0.0,+0.3] & $+0.2$ [+0.0,+0.3] & $+0.2$ [+0.1,+0.3] & $+0.3$ [+0.1,+0.3] \\
 & 5 & $+0.2$ [+0.1,+0.3] & $+0.2$ [+0.1,+0.3] & $+0.2$ [+0.0,+0.3] & $+0.2$ [+0.1,+0.3] & $+0.1$ [+0.0,+0.3] \\
 & 6 & $+0.2$ [+0.0,+0.3] & $+0.2$ [+0.0,+0.3] & $+0.1$ [+0.0,+0.2] & $+0.2$ [+0.1,+0.3] & $+0.1$ [+0.0,+0.3] \\
\midrule
Gumbel-ST+CAGE & 4 & $+0.2$ [+0.0,+0.5] & $+0.3$ [+0.0,+0.7] & $+0.2$ [-0.1,+1.1] & $+0.2$ [+0.0,+0.5] & $+0.2$ [+0.0,+0.6] \\
 & 5 & $+0.2$ [+0.0,+0.9] & $+0.1$ [+0.0,+1.3] & $+0.2$ [+0.0,+1.8] & $+0.2$ [-0.1,+0.8] & $+0.1$ [+0.0,+0.9] \\
 & 6 & $+0.2$ [-0.1,+3.2] & $+0.2$ [-0,+5] & $+0.2$ [-0,+5] & $+0.2$ [-0.1,+2.6] & $+0.1$ [-0.2,+1.7] \\
\bottomrule
\end{tabular}
\end{table}

\paragraph{MNIST-Binary.}
Table~\ref{tab:gap_dynamics_mnist_binary} shows gap dynamics on MNIST-Binary. With binary inputs, computation gap is zero, so this table isolates pure selection gap.

Hard-ST and Hard-ST+CAGE achieve exactly 0\% gap at all temperatures and layers. The cells show ``$+0.0$'' with no range because the gap is identically zero throughout training. This is the strongest validation of forward alignment: when training uses the same argmax selection as inference, there is no selection mismatch by construction.

\begin{table}[h]
\centering
\caption{Gap trajectory on MNIST-Binary. Format: Final [min, max] (\%). With binary inputs, computation gap = 0, so this isolates pure selection gap. Hard-ST achieves exactly 0\% at all temperatures.}
\label{tab:gap_dynamics_mnist_binary}
\scriptsize
\begin{tabular}{llccccc}
\toprule
\textbf{Method} & \textbf{L} & $\tau$=0.05 & $\tau$=0.1 & $\tau$=0.5 & $\tau$=1.0 & $\tau$=2.0 \\
\midrule
Soft-Mix & 4 & $+0.0$ [-0.1,+0.0] & $+0.0$ & $+0.0$ [-0.1,+0.1] & $+0.0$ [-0.1,+0.2] & $+0.0$ [-0.1,+0.3] \\
 & 5 & $+0.0$ & $+0.0$ & $+0.0$ [-0.1,+0.1] & $+0.0$ [-0.1,+0.1] & $+0.0$ [-0.1,+1.6] \\
 & 6 & $+0.0$ & $+0.0$ & $+0.0$ [-0.1,+0.1] & $+0.0$ [-0.1,+0.1] & $+0.6$ [+0,+24] \\
\midrule
Soft-Gumbel & 4 & $+0.3$ [-31,+1] & $+0.1$ [-14,+1] & $+0.1$ [-0.3,+1.8] & $+0.2$ [+0,+9] & $+0.2$ [+0,+23] \\
 & 5 & $-0.1$ [-35,+1] & $-0.5$ [-31,+2] & $+0.2$ [-0.4,+4.3] & $+0.2$ [+0,+18] & $+0.7$ [+0,+35] \\
 & 6 & $+0.7$ [-30,+2] & $+0.2$ [-25,+2] & $+0.2$ [-1,+7] & $+0.2$ [+0,+17] & $+2.1$ [+2,+40] \\
\midrule
Hard-ST & 4 & $+0.0$ & $+0.0$ & $+0.0$ & $+0.0$ & $+0.0$ \\
 & 5 & $+0.0$ & $+0.0$ & $+0.0$ & $+0.0$ & $+0.0$ \\
 & 6 & $+0.0$ & $+0.0$ & $+0.0$ & $+0.0$ & $+0.0$ \\
\midrule
Gumbel-ST & 4 & $-45.1$ [-47,-41] & $-44.7$ [-55,-43] & $+0.0$ [-0.5,+0.1] & $+0.0$ [-0.2,+0.3] & $+0.0$ [-0.1,+0.5] \\
 & 5 & $-38.7$ [-39,-28] & $-57.9$ [-59.9,-55.3] & $+0.0$ [-0.7,+0.1] & $+0.0$ [-0.2,+0.3] & $+0.0$ [-0.2,+0.7] \\
 & 6 & $-40.4$ [-41,-29] & $-58.8$ [-61,-55] & $-0.1$ [-1.3,+0.1] & $+0.0$ [-0.3,+0.3] & $+0.0$ [-0.3,+1.1] \\
\midrule
Hard-ST+CAGE & 4 & $+0.0$ & $+0.0$ & $+0.0$ & $+0.0$ & $+0.0$ \\
 & 5 & $+0.0$ & $+0.0$ & $+0.0$ & $+0.0$ & $+0.0$ \\
 & 6 & $+0.0$ & $+0.0$ & $+0.0$ & $+0.0$ & $+0.0$ \\
\midrule
Gumbel-ST+CAGE & 4 & $+0.1$ [-0.2,+0.6] & $+0.0$ [-0.2,+0.8] & $-0.1$ [-0.3,+0.7] & $+0.0$ [-0.1,+0.4] & $+0.0$ [-0.2,+0.4] \\
 & 5 & $+0.0$ [-0.2,+1.7] & $+0.0$ [-0.2,+2.3] & $+0.0$ [-0.2,+2.0] & $+0.0$ [-0.2,+0.8] & $+0.0$ [-0.2,+0.7] \\
 & 6 & $+0.0$ [-0.3,+4.5] & $+0.1$ [-0,+9] & $+0.0$ [-0.2,+3.5] & $+0.0$ [-0.2,+1.6] & $+0.0$ [-0.2,+2.2] \\
\bottomrule
\end{tabular}
\end{table}

\paragraph{CIFAR-10-Binary.}

Table~\ref{tab:gap_dynamics_cifar10} shows gap dynamics on CIFAR-10-Binary binarized with 31 evenly-spaced thresholds per channel.

CIFAR-10-Binary shows smaller transient gaps for Soft-Gumbel compared to MNIST (max $\sim$3.5\% vs $\sim$40\%). This may be because the harder classification task requires more commitment to specific gates, reducing the benefit of hedging. Gumbel-ST still fails at low $\tau$ but less catastrophically ($-$14\% vs $-$59\%).

\begin{table}[t]
\centering
\caption{Gap trajectory on CIFAR-10-Binary. Format: Final [min, max] (\%). Hard-ST achieves exactly 0\% gap; Gumbel-ST fails at low $\tau$ but less severely than on MNIST.}
\label{tab:gap_dynamics_cifar10}
\scriptsize
\begin{tabular}{llccccc}
\toprule
\textbf{Method} & \textbf{L} & $\tau$=0.05 & $\tau$=0.1 & $\tau$=0.5 & $\tau$=1.0 & $\tau$=2.0 \\
\midrule
Soft-Mix & 4 & $+0.0$ [-0.1,+0.1] & $+0.0$ [-0.1,+0.1] & $+0.0$ [-0.1,+0.1] & $+0.0$ [-0.3,+0.3] & $+0.1$ [-0.3,+0.5] \\
 & 5 & $+0.0$ [-0.1,+0.1] & $+0.0$ [-0.1,+0.1] & $+0.0$ [-0.1,+0.1] & $+0.0$ [-0.3,+0.3] & $-0.1$ [-0.2,+1.5] \\
 & 6 & $+0.0$ [-0.1,+0.1] & $+0.0$ [-0.1,+0.1] & $+0.0$ [-0.1,+0.1] & $+0.0$ [-0.2,+0.3] & $+0.3$ [-0,+9] \\
\midrule
Soft-Gumbel & 4 & $-0.2$ [-2.3,+0.4] & $+0.0$ [-2.0,+0.4] & $+0.0$ [-0.7,+0.5] & $+0.0$ [-0.3,+0.3] & $+0.1$ [-0.2,+0.4] \\
 & 5 & $+0.0$ [-2.5,+0.3] & $-0.2$ [-2.6,+0.4] & $+0.0$ [-0.8,+0.7] & $+0.0$ [-0.3,+0.5] & $+0.1$ [-0.3,+0.4] \\
 & 6 & $-0.3$ [-3.5,+0.2] & $+0.2$ [-2.3,+0.4] & $+0.0$ [-0.7,+0.5] & $+0.0$ [-0.3,+0.5] & $+0.0$ [-0.3,+0.4] \\
\midrule
Hard-ST & 4 & $+0.0$ & $+0.0$ & $+0.0$ & $+0.0$ & $+0.0$ \\
 & 5 & $+0.0$ & $+0.0$ & $+0.0$ & $+0.0$ & $+0.0$ \\
 & 6 & $+0.0$ & $+0.0$ & $+0.0$ & $+0.0$ & $+0.0$ \\
\midrule
Gumbel-ST & 4 & $-9.6$ [-11,-4] & $-9.5$ [-11.8,-8.2] & $-0.2$ [-0.8,+0.5] & $+0.0$ [-0.3,+0.2] & $+0.0$ [-0.4,+0.5] \\
 & 5 & $-5.2$ [-8,-2] & $-11.4$ [-14.3,-10.4] & $+0.0$ [-0.9,+0.2] & $+0.0$ [-0.4,+0.3] & $-0.1$ [-0.4,+0.3] \\
 & 6 & $-7.5$ [-8,-0] & $-13.7$ [-16.4,-12.6] & $-0.4$ [-1.2,+0.2] & $+0.0$ [-0.4,+0.3] & $-0.1$ [-0.3,+0.4] \\
\midrule
Hard-ST+CAGE & 4 & $+0.0$ & $+0.0$ & $+0.0$ & $+0.0$ & $+0.0$ \\
 & 5 & $+0.0$ & $+0.0$ & $+0.0$ & $+0.0$ & $+0.0$ \\
 & 6 & $+0.0$ & $+0.0$ & $+0.0$ & $+0.0$ & $+0.0$ \\
\midrule
Gumbel-ST+CAGE & 4 & $-0.1$ [-0.6,+0.3] & $-0.1$ [-0.6,+0.3] & $-0.1$ [-0.6,+0.3] & $-0.1$ [-0.4,+0.4] & $+0.0$ [-0.6,+0.3] \\
 & 5 & $+0.0$ [-0.6,+0.3] & $-0.2$ [-0.6,+0.3] & $-0.1$ [-0.6,+0.3] & $+0.0$ [-0.8,+0.3] & $-0.2$ [-0.5,+0.3] \\
 & 6 & $+0.0$ [-0.6,+0.4] & $+0.0$ [-0.7,+0.4] & $-0.1$ [-0.7,+0.3] & $+0.0$ [-0.5,+0.3] & $-0.1$ [-0.5,+0.4] \\
\bottomrule
\end{tabular}
\end{table}

\subsection{Heatmap Visualization}
\label{app:heatmap}

Figure~\ref{fig:heatmap_app} visualizes gap across the full (layer $\times$ temperature) grid, providing an overview of all experimental conditions.

\begin{figure*}[t]
    \centering
    \includegraphics[width=0.85\textwidth]{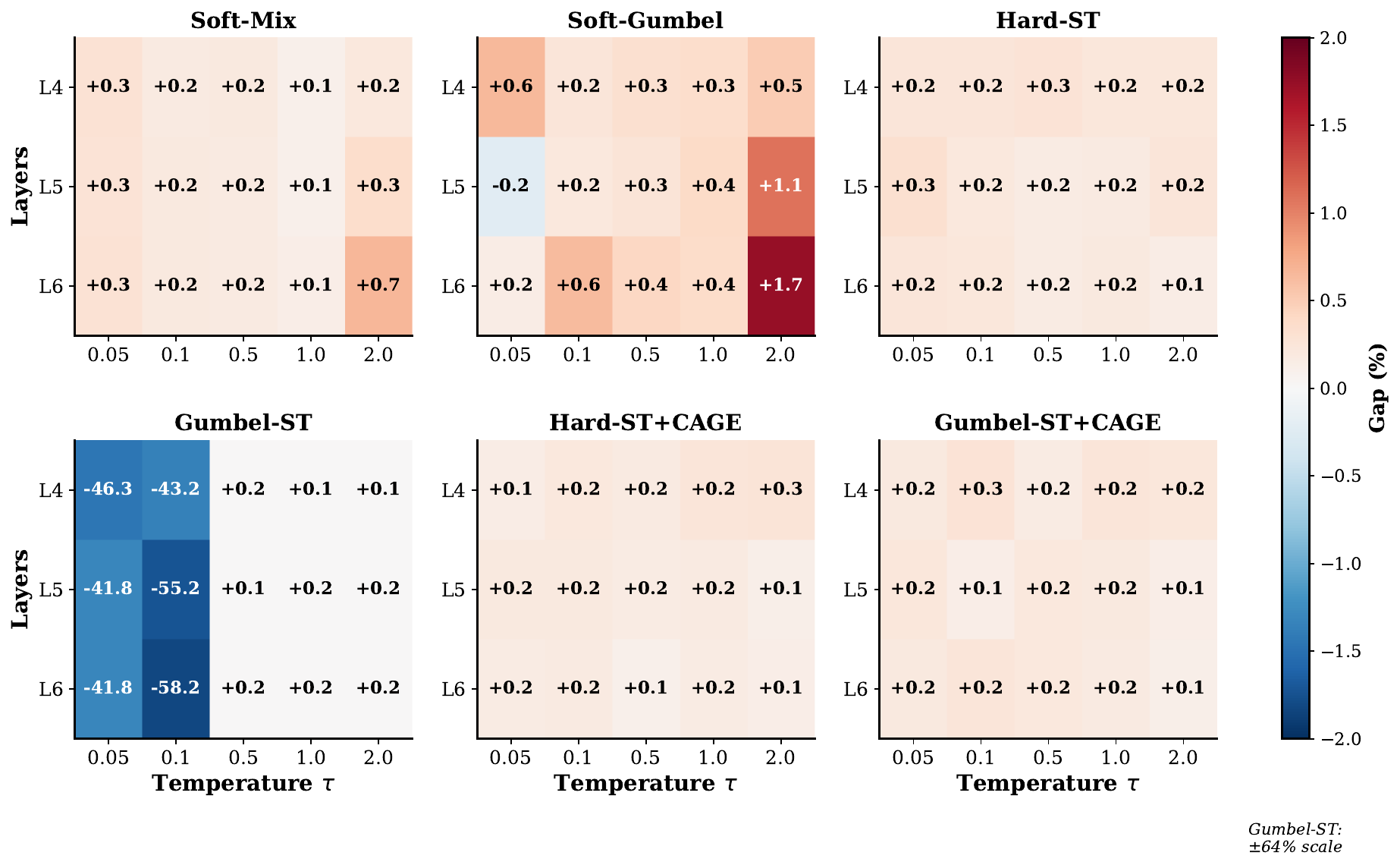}
    \caption{Gap heatmaps on MNIST. Hard-ST shows uniform near-zero gap (light colors) across all conditions. Gumbel-ST exhibits catastrophic negative gap at low $\tau$ (dark blue = training failure where deployment outperforms training). CAGE eliminates this failure, restoring uniform small gap.}
    \label{fig:heatmap_app}
\end{figure*}


\subsection{Statistical Tests}
\label{app:statistical_tests}

We report statistical significance tests for the key claims in this paper. All tests use paired $t$-tests with $n=3$ seeds per configuration. With small sample sizes and low variance, even small differences can achieve statistical significance. We therefore distinguish between \textbf{statistical significance} ($p < 0.05$) and \textbf{practical significance} ($|\Delta\text{Accuracy}| > 5\%$ or $|\Delta\text{Gap}| > 5\%$).

Table~\ref{tab:statistical_significance} reports results for MNIST-Binary with $L$=6 layers, which isolates selection gap (computation gap is zero for binary inputs).

\paragraph{Findings.}

\textbf{Gumbel-ST failure is both statistically and practically significant.}
The accuracy drop from $\tau=2.0$ to $\tau=0.05$ is highly significant ($t=-36.16$, $p<0.001$, $d=31.21$) and represents catastrophic training failure (47 percentage points accuracy loss).

\textbf{Hard-ST shows statistical but not practical sensitivity to $\tau$.}
While the accuracy difference is statistically significant ($p=0.005$) due to low variance across seeds, the 1.1 percentage point range is negligible compared to Gumbel-ST's 47 percentage point range. The claim of ``temperature robustness'' refers to practical robustness.

\textbf{CAGE effectively resolves Gumbel-ST's failure.}
At $\tau=0.05$, CAGE improves Gumbel-ST accuracy by 47 percentage points ($t=-38.81$, $p<0.001$, $d=31.28$), restoring performance to the level achieved at higher temperatures.

\textbf{Hard-ST achieves zero selection gap.}
The gap difference between Hard-ST (0\%) and Soft-Gumbel (40\% peak) at $\tau=2.0$ is statistically significant ($p=0.024$) with large effect size ($d=5.15$).

\textbf{Forward structure effect shows practical significance.}
The comparison between Soft-Gumbel and Gumbel-ST gaps (same Gumbel noise, different forward pass) shows a clear practical difference (40\% vs 1\% gap) with marginal statistical significance($p=0.093$, $d=2.61$). The elevated $p$-value reflects high variance in Soft-Gumbel's gap across seeds; the direction consistently supports the forward alignment hypothesis.

\begin{table}[h]
\centering
\caption{Statistical significance of key claims (MNIST-Binary, $L$=6, $n$=3 seeds). 
Effect size $d$ is Cohen's $d$~\citep{cohen1988statistical}.}
\label{tab:statistical_significance}
\small
\begin{tabular}{lcccccc}
\toprule
Claim & Comparison & $t$ & $p$ & $d$ & Stat. & Prac. \\
\midrule
\multicolumn{7}{l}{\textit{Temperature sensitivity (accuracy)}} \\
\quad Gumbel-ST fails at low $\tau$ & $\tau$=0.05 vs $\tau$=2.0 & $-$36.16 & $<$.001 & 31.21 & *** & \checkmark \\
\quad Hard-ST is $\tau$-robust & $\tau$=0.05 vs $\tau$=2.0 & $-$14.53 & .005 & 16.26 & ** & -- \\
\addlinespace
\multicolumn{7}{l}{\textit{CAGE effectiveness}} \\
\quad CAGE fixes Gumbel-ST & Gumbel-ST vs +CAGE ($\tau$=0.05) & $-$38.81 & $<$.001 & 31.28 & *** & \checkmark \\
\addlinespace
\multicolumn{7}{l}{\textit{Selection gap comparisons (at $\tau$=2.0)}} \\
\quad Hard-ST has zero selection gap & Hard-ST vs Soft-Gumbel & $-$6.30 & .024 & 5.15 & * & \checkmark \\
\addlinespace
\multicolumn{7}{l}{\textit{Forward vs backward structure}} \\
\quad Forward type determines gap & Soft-Gumbel vs Gumbel-ST gap & 3.05 & .093 & 2.61 & ns & \checkmark \\
\bottomrule
\end{tabular}
\\[1mm]
\footnotesize
\textit{Significance:} $^{***}p<0.001$, $^{**}p<0.01$, $^{*}p<0.05$, ns = not significant. 
\textit{Practical:} \checkmark = $|\Delta| > 5\%$ accuracy difference.
\end{table}

\paragraph{Cross-dataset consistency.}
Table~\ref{tab:stat_cross_dataset} confirms these findings hold across datasets.

\begin{table}[htb]
\centering
\caption{Cross-dataset verification of key claims ($L$=6, significance level).}
\label{tab:stat_cross_dataset}
\small
\begin{tabular}{lccc}
\toprule
Claim & MNIST & MNIST-Binary & CIFAR-10-Binary \\
\midrule
Gumbel-ST fails at low $\tau$ & *** & *** & *** \\
Hard-ST is $\tau$-robust & **$^\dagger$ & **$^\dagger$ & ***$^\dagger$ \\
CAGE fixes Gumbel-ST & *** & *** & *** \\
\bottomrule
\end{tabular}
\\[1mm]
\footnotesize
\textit{Significance:} $^{***}p<0.001$, $^{**}p<0.01$, $^{*}p<0.05$.\\
$^\dagger$Statistically significant but practically robust: MNIST datasets show $<$2\% accuracy range, CIFAR-10-Binary shows $\sim$6\% range (all far below Gumbel-ST's $>$39\%).
\end{table}

\subsection{Gate Usage Analysis}
\label{app:gate_usage}

Figure~\ref{fig:gate_usage_app} shows the distribution of selected gates across all nodes in the trained networks.

\begin{figure}[htb]
    \centering
    \includegraphics[width=0.65\linewidth]{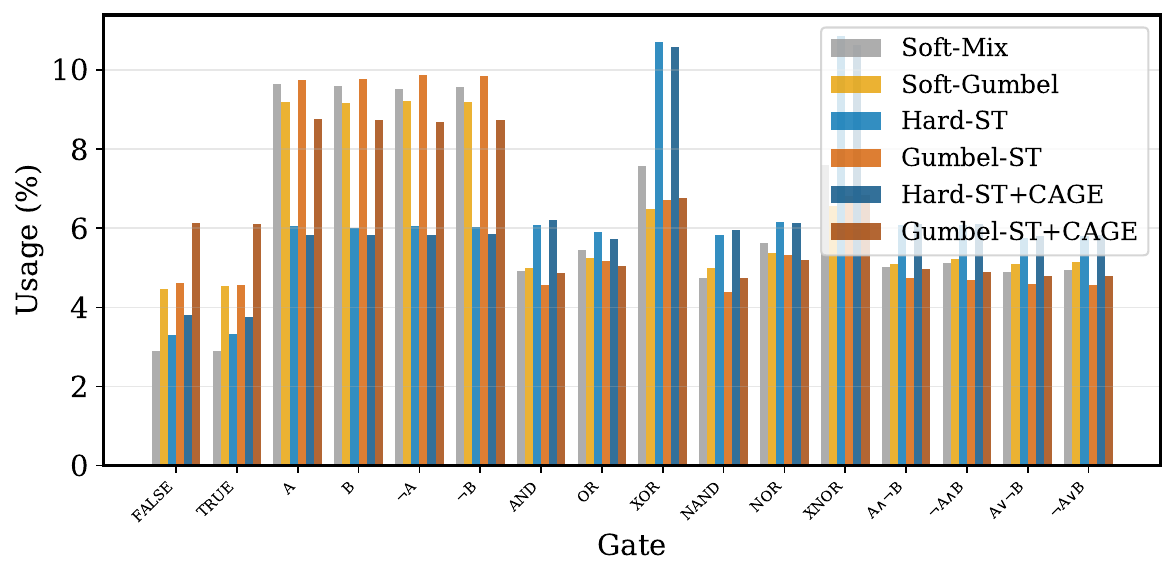}
\caption{Gate usage distribution (MNIST-Binary, $L$=6, $\tau$=1.0). Hard-ST methods show pronounced XOR/XNOR preference ($\sim$11\% each vs.\ $\sim$6\% expected under uniform selection), while soft methods distribute usage more uniformly. This suggests hard selection encourages commitment to more expressive Boolean operations.}
    \label{fig:gate_usage_app}
\end{figure}

%% file: appendix_input_distribution.tex
\section{Input Distribution Analysis}
\label{app:input_distribution}

This appendix analyzes how input distribution affects the computation gap between soft and hard logic gates, explaining why computation gap is small in our experiments.

\subsection{Gate Definitions}
\label{app:gate_definitions}

We use product t-norm and probabilistic t-conorm relaxations~\citep{klement2000triangular, vankrieken2022analyzing, petersen2022deep} for soft gates, with standard negation ($\neg a \to 1-a$). Hard gates use Boolean operations with threshold $\theta = 0.5$. Table~\ref{tab:all_gates_formulas} lists all 16 gates.

\begin{table}[h]
\centering
\caption{All 16 two-input Boolean gates with soft (differentiable) and hard (Boolean) formulas.}
\label{tab:all_gates_formulas}
\small
\begin{tabular}{cllll}
\toprule
\textbf{Index} & \textbf{Gate} & \textbf{Boolean} & \textbf{Soft Formula} & \textbf{Hard Formula} \\
\midrule
0 & FALSE & $0$ & $0$ & $0$ \\
1 & AND & $a \land b$ & $ab$ & $\mathds{1}[a > 0.5] \land \mathds{1}[b > 0.5]$ \\
2 & A$\land\neg$B & $a \land \neg b$ & $a(1-b)$ & $\mathds{1}[a > 0.5] \land \mathds{1}[b \leq 0.5]$ \\
3 & A & $a$ & $a$ & $\mathds{1}[a > 0.5]$ \\
4 & $\neg$A$\land$B & $\neg a \land b$ & $(1-a)b$ & $\mathds{1}[a \leq 0.5] \land \mathds{1}[b > 0.5]$ \\
5 & B & $b$ & $b$ & $\mathds{1}[b > 0.5]$ \\
6 & XOR & $a \oplus b$ & $a+b-2ab$ & $\mathds{1}[a > 0.5] \oplus \mathds{1}[b > 0.5]$ \\
7 & OR & $a \lor b$ & $a+b-ab$ & $\mathds{1}[a > 0.5] \lor \mathds{1}[b > 0.5]$ \\
8 & NOR & $\neg(a \lor b)$ & $1-a-b+ab$ & $\neg(\mathds{1}[a > 0.5] \lor \mathds{1}[b > 0.5])$ \\
9 & XNOR & $\neg(a \oplus b)$ & $1-a-b+2ab$ & $\neg(\mathds{1}[a > 0.5] \oplus \mathds{1}[b > 0.5])$ \\
10 & $\neg$B & $\neg b$ & $1-b$ & $\mathds{1}[b \leq 0.5]$ \\
11 & A$\lor\neg$B & $a \lor \neg b$ & $1-b+ab$ & $\mathds{1}[a > 0.5] \lor \mathds{1}[b \leq 0.5]$ \\
12 & $\neg$A & $\neg a$ & $1-a$ & $\mathds{1}[a \leq 0.5]$ \\
13 & $\neg$A$\lor$B & $\neg a \lor b$ & $1-a+ab$ & $\mathds{1}[a \leq 0.5] \lor \mathds{1}[b > 0.5]$ \\
14 & NAND & $\neg(a \land b)$ & $1-ab$ & $\neg(\mathds{1}[a > 0.5] \land \mathds{1}[b > 0.5])$ \\
15 & TRUE & $1$ & $1$ & $1$ \\
\bottomrule
\end{tabular}
\end{table}

\subsection{Computation Gap Bounds}

The computation gap for a gate $g$ is the expected absolute difference between soft and hard outputs:
\begin{equation}
    \Delta_g = \mathbb{E}_{a, b \sim \mathcal{D}} \left[ \left| g^{\text{soft}}(a, b) - g^{\text{hard}}(a, b) \right| \right].
    \label{eq:comp_gap_def}
\end{equation}

The gap depends critically on input distribution. Binary inputs $\{0, 1\}$ achieve zero gap since soft and hard gates produce identical outputs. Uniform $[0,1]$ inputs yield moderate gap ($\sim$20\%). The upper bound occurs at the decision boundary ($a \approx 0.5$). Table~\ref{tab:gap_bounds} summarizes these bounds.

\begin{table}[h]
\centering
\caption{Computation gap for representative gates under different input distributions.}
\label{tab:gap_bounds}
\small
\begin{tabular}{lcccc}
\toprule
\textbf{Distribution} & \textbf{Unary} & \textbf{AND/OR} & \textbf{XOR} \\
\midrule
Binary $\{0,1\}$ (lower bound) & 0\% & 0\% & 0\% \\
MNIST (91\% binary-like) & $\sim$2\% & $\sim$3\% & $\sim$5\% \\
Uniform $[0,1]$ & 25\% & 21.9\% & 16.7\% \\
At boundary $a{=}0.5$ (upper) & 50\% & 75\% & 50\% \\
\bottomrule
\end{tabular}
\end{table}

For Uniform $[0,1]$ inputs, the gaps can be computed analytically. Unary gates have $\Delta = 0.25$. AND and OR gates have $\Delta = 0.21875$ (by quadrant integration; contributions swap between gates but totals match). XOR gates have $\Delta = 0.167$. Negation preserves gap: $\Delta_{\text{NAND}} = \Delta_{\text{AND}}$, etc. Table~\ref{tab:app_all_gates_gap} groups all 16 gates by gap value.

\begin{table}[h]
\centering
\caption{Expected computation gap for all 16 gates on Uniform [0,1] inputs.}
\label{tab:app_all_gates_gap}
\small
\begin{tabular}{llc}
\toprule
\textbf{Category} & \textbf{Gates} & \textbf{Gap (\%)} \\
\midrule
Constants & FALSE, TRUE & 0.0 \\
XOR-type & XOR, XNOR & 16.7 \\
AND/OR-type & AND, OR, NAND, NOR, A$\land\neg$B, $\neg$A$\land$B, A$\lor\neg$B, $\neg$A$\lor$B & 21.9 \\
Unary & A, B, $\neg$A, $\neg$B & 25.0 \\
\bottomrule
\end{tabular}
\end{table}

\subsection{MNIST Pixel Distribution}

MNIST is highly sparse: 80.9\% of pixels are exactly zero (background), and only 8.7\% fall in the intermediate range $[0.1, 0.9]$. Overall, 91.3\% of pixels are binary-like ($<0.1$ or $>0.9$). This explains why computation gap is small in our experiments.

\begin{table}[h]
\centering
\caption{MNIST pixel value distribution.}
\label{tab:app_mnist_dist}
\small
\begin{tabular}{lcc}
\toprule
\textbf{Pixel Range} & \textbf{MNIST [0,1]} & \textbf{MNIST-Binary} \\
\midrule
Exactly 0 & 80.9\% & 86.7\% \\
$(0, 0.1)$ & 1.7\% & 0.0\% \\
$[0.1, 0.9]$ & 8.7\% & 0.0\% \\
$(0.9, 1.0]$ & 8.8\% & 13.3\% \\
\midrule
\textbf{Binary-like} ($<0.1$ or $>0.9$) & \textbf{91.3\%} & \textbf{100.0\%} \\
\bottomrule
\end{tabular}
\end{table}

\subsection{Implications}

With 91.3\% binary-like pixels, MNIST is close to the best-case scenario where binary inputs yield 0\% gap. MNIST-Binary achieves exactly zero computation gap, isolating pure selection gap. This explains why our experiments show selection gap as the dominant component: MNIST's distribution minimizes computation gap, making selection gap the primary differentiator between methods.

For applications with truly continuous inputs (e.g., Uniform $[0,1]$), computation gap becomes significant ($\sim$20\%), and inputs concentrated near the decision boundary can reach 50--75\% gap.